\newtheorem{theorem}{Theorem}
\newtheorem{proposition}[theorem]{Proposition}
\newtheorem{lemma}[theorem]{Lemma}
\theoremstyle{definition}
\newtheorem{definition}[theorem]{Definition}
\newtheorem{observation}[theorem]{Observation}
\theoremstyle{remark}
\DeclareMathOperator*{\argmin}{arg\,min}
\newcommand{\RR}{\mathbb{R}}
\newcommand{\actions}{\mathcal{W}}
\newcommand{\regret}{\mathcal{R}}
\newcommand{\groups}{\mathcal{G}}
\newcommand{\Tg}[1]{\mathcal{T}(#1)}
\newcommand{\lossv}{\bm{\ell}}
\newcommand{\loss}{\ell}
\newcommand{\noti}{\bar{i}}
\newcommand{\hedgeloss}{\hat{\loss}}
\title{\bf Best-Case Lower Bounds in Online Learning}
\author[1,2]{Crist\'obal Guzm\'an}
\author[3]{Nishant A. Mehta}
\author[3]{Ali Mortazavi}
\affil[1]{Department of Applied Mathematics, University of Twente}
\affil[2]{IMC, Pontificia Universidad Cat\'olica de Chile}
\affil[3]{Department of Computer Science, University of Victoria}
\date{}
\begin{document}

\maketitle

\begin{abstract}
Much of the work in online learning focuses on the study of sublinear upper bounds on the regret. In this work, we initiate the study of best-case lower bounds in online convex optimization, wherein we bound the largest \emph{improvement} an algorithm can obtain relative to the single best action in hindsight. This problem is motivated by the goal of better understanding the adaptivity of a learning algorithm. Another motivation comes from fairness: it is known that best-case lower bounds are instrumental in obtaining algorithms for decision-theoretic online learning (DTOL) that satisfy a notion of group fairness. Our contributions are a general method to provide best-case lower bounds in Follow The Regularized Leader (FTRL) algorithms with time-varying regularizers, which we use to show that best-case lower bounds are of the same order as existing upper regret bounds: this includes situations with a fixed learning rate, decreasing learning rates, timeless methods, and adaptive gradient methods. In stark contrast, we show that the linearized version of FTRL can attain negative linear regret. Finally, in DTOL with two experts and binary predictions, we fully characterize the best-case sequences, which provides a finer understanding of the best-case lower bounds.
\end{abstract}

	\section{Introduction}
	
	A typical work in online learning would develop algorithms that provably achieve low regret for some family of problems, where low regret means that a learning algorithm's cumulative loss is not much larger than 
that of the best expert (or action) in hindsight. Such a work often focuses on algorithms that exhibit various forms of adaptivity, including \emph{anytime} algorithms, which adapt to an unknown time horizon $T$; \emph{timeless} algorithms, which obtain ``first-order'' regret bounds that replace dependence on the time horizon by the cumulative loss of the best expert; and algorithms like AdaGrad \citep{duchi2011adaptive}, which adapt to the geometry of the data. 
	These examples of adaptivity all involve competing with the best expert in hindsight, but adaptivity comes in many guises. 
	Another form of adaptivity involves upgrading the comparator itself: in the \emph{shifting regret} (also known as the \emph{tracking regret}) \citep{herbster1998tracking}, the learning algorithm competes with 
	the best sequence of experts that shifts, or switches, $k$ times for some small $k$. 
	Naturally, an algorithm with low shifting regret can potentially perform much better than the single best expert in hindsight, thereby obtaining classical regret that is substantially negative.

	Our work serves as a counterpoint to previous works:
	we show that a broad class of learning strategies provably fails, against \emph{any} sequence of data, to substantially outperform the best expert in hindsight. Thus, these strategies are unable to obtain low shifting regret and, more generally, low regret against any comparator sequence that can be substantially better than the best expert in hindsight. More concretely, this paper initiates the study of \emph{best-case lower bounds} in online convex optimization (OCO) for the general family of learning strategies known as Follow The Regularized Leader (FTRL) \citep{abernethy2008competing}. That is, we study the \emph{minimum} possible regret of a learning algorithm over all possible sequences. 
	As we will show, many instances of FTRL --- including adaptive instances that are anytime, timeless, or adapt to gradients like AdaGrad --- never have regret that is much less than the negation of the corresponding regret upper bounds. Thus, while these instances can be adaptive in some ways, they are in a sense prohibited from uniformly obtaining low regret for adaptive notions of regret like the shifting regret. 
	For example, in the setting of decision-theoretic online learning (DTOL) with $d$ experts \citep{freund1997decision}, the well-known anytime version of Hedge (which uses the time-varying learning rate $\eta_t \asymp \sqrt{\log(d)/t}$) enjoys $O(\sqrt{T \log d})$ worst-case regret and, as we show, has $-O(\sqrt{T \log d})$ best-case regret. 
	Moreover, in the same setting under the restriction of two experts and binary losses, we \emph{exactly identify the best-case sequence}, thereby showing that our best-case lower bound for this setting is tight. The structure of this sequence is surprisingly simple, but the arguments we use to pinpoint this sequence are playfully complex, bearing some similarity to the techniques of 
\cite{vanerven2014follow} and \cite{lewi2020thompson}. The latter work \citep{lewi2020thompson} considers the regret of Thompson Sampling in adversarial bit prediction; they use swapping rules and identify best-case sequences, as do we. However, the algorithms and problem settings have important differences.

	A key motivation for our work is a recent result of \cite{blum2018preserving} which shows, in the setting of DTOL, that Hedge with constant learning rate has best-case regret lower bounded by $-O(\sqrt{T})$. 
This result, taken together with worst-case upper bounds of order $O(\sqrt{T})$, is then used to show that if each of finitely many experts approximately satisfies a certain notion of group fairness, then a clever use of the Hedge algorithm (running it separately on each group) also approximately satisfies the same notion of group fairness while still enjoying $O(\sqrt{T})$ regret. However, we stress that their result is very limited in that it applies only to Hedge when run with a known time-horizon. The fixed time horizon assumption also implies that their notion of group fairness also is inherently tied to a fixed time horizon (see Section~\ref{sec:applications} for a detailed discussion), and this latter implication can lead to experts that seem very unfair but which, based on a fixed horizon view of group fairness, are technically considered to be fair. Our best-case lower bounds enable the results of 
\cite{blum2018preserving} 
to hold in much greater generality; in particular, our results enable the use of an anytime version of group fairness, which we feel is truly needed.
	
	To our knowledge, our work is the first to study best-case lower bounds for 
Adaptive FTRL \citep{mcmahan2017survey}, i.e., FTRL with time-varying regularizers that can adapt to the learning algorithms' past observations. 
	The most closely related work 
is a paper by Gofer and Mansour (GM) \citep{gofer2016lower} which, in the setting of online linear optimization (OLO) and when using FTRL with a \emph{fixed} regularizer,\footnote{\cite{gofer2016lower} use learners that follow the gradient of a concave potential, which is essentially equivalent to FTRL.} provides various lower bounds on the regret. 
For instance, they show that for any sequence of data, the regret is nonnegative; we recover this result as a special case of our analysis, and our analysis extends to OCO as well. 
GM also 
lower bound 
what they call the \emph{anytime regret}, which superficially may seem similar to our providing best-case lower bounds for anytime algorithms. Yet, as we explain in Section~\ref{sec:general}, these two notions greatly differ. In short, their analysis lower bounds the maximum regret (over all prefixes of a sequence) for fixed horizon algorithms, whereas our analysis lower bounds the regret for all prefixes (including the minimum) for adaptively regularized algorithms, which includes anytime algorithms.

	A natural question is whether results similar to our results for FTRL also hold for online mirror descent (OMD). In some situations,	such as in OLO when the action space is the probability simplex, the regularizer is the negative Shannon entropy, and the learning rate is constant, our results automatically apply to OMD because the methods are then the same. More generally, it is known that OMD with a time-varying learning rate can fail spectacularly by obtaining linear regret (see Theorem 4 of \cite{orabona2018scale}). Since so much of our work is tied to obtaining anytime guarantees (which would require a time-varying learning rate), we forego providing best-case lower bounds for OMD.
	
	Our main contributions are as follows:
	\begin{enumerate}
		\item We give a general best-case lower bound on the regret for Adaptive FTRL (Section~\ref{sec:general}). Our analysis crucially centers on the notion of adaptively regularized regret, which serves as a potential function to keep track of the regret.
		\item We show that this general bound can easily be applied to yield concrete best-case lower bounds for FTRL with time-varying negative regularizers, one special case being the negative Shannon entropy. 
We also show that an adaptive gradient FTRL algorithm (which can be viewed as a ``non-linearized'' version of the dual averaging version of AdaGrad \citep{duchi2011adaptive}; see Section~\ref{sec:adagrad} for details) admits a best-case lower bound that is essentially the negation of its upper bound (Section~\ref{sec:applications}). 
		\item A widely used variant of FTRL for OCO is to first linearize the losses, leading to linearized FTRL. This method works well with respect to upper bounds, as a basic argument involving convexity goes in the right direction. However, with regards to best-case lower bounds, we show a simple construction (Section~\ref{sec:negative}) for which linearized FTRL obtains $-\Omega(T)$ regret.\footnote{This negative construction, combined with the fact that the dual averaging version of AdaGrad is a linearized version of FTRL, is why we only prove best-case lower bounds for the 
adaptive gradient FTRL algorithm.}
		\item In the setting of DTOL with 2 experts and binary losses, we explicitly identify the best-case sequence, proving that our best-case lower bounds are tight in this setting (Section~\ref{sec:best-case-seq}).
	\end{enumerate}
	
        The next section formalizes the problem setting and FTRL. We then develop the main results.

	\section{Problem Setting and General Prediction Strategies}
	\label{sec:problem}
	
	Before giving the problem setting, we first set some notation. 
	We denote the norm of a vector $w \in \actions$ as $\|w\|$, 
the corresponding dual norm is denoted as $\|\cdot\|_*$, 
and $\log$ is always the natural logarithm.
	
	\paragraph{Problem setting.}
	
	We consider the OCO setting. This is a game between Learner and Nature. In each round $t = 1, 2, \ldots, T$, Learner selects an action $w_t$ belonging to a closed, convex set $\actions \subseteq \RR^d$. Then, with knowledge of $w_1, \ldots, w_t$, Nature responds with a convex loss function $f_t \colon \actions \mapsto \RR$. Learner then observes $f_t$ and suffers loss $f_t(w_t)$. 
	Learner's goal is to minimize its regret, defined as
	\begin{align*}
		\regret_T := \sup_{w \in \actions} \sum_{t=1}^T [ f_t(w_t)-f_t(w) ] ,
	\end{align*}
	which is the gap between Learner's cumulative loss and 
that of the best action in hindsight.
	
	This paper will cover several examples of OCO. 
	The first example is the subclass of OLO problems. 
	In OLO, 
	the loss functions $f_t$ are linear, with $f_t(w) = \langle \ell_t, w \rangle$ for some loss vector $\ell_t \in \RR^d$. A noteworthy special case of OLO is DTOL, also known as the Hedge setting. 
	In DTOL, we take $\actions$ to be equal to the simplex $\Delta_d$ over $d$ outcomes and restrict the loss vectors as $\ell_t \in [0, 1]^d$. 
	We introduce some notation that will be useful in the DTOL setting. 
	For any expert $j \in [d]$, let $L_{t,j} := \sum_{s=1}^t \ell_{s,j}$ denote the cumulative loss of expert $j$ until the end of round $t$. We denote the loss of Learner in round $t$ as $\hat{\ell}_t := \langle \ell_t, w_t \rangle$ and Learner's cumulative loss at the end of round $t$ as $\hat{L}_t := \sum_{s=1}^t \hat{\ell}_s$.

	In this work, we consider the general prediction strategy of FTRL.
	
	\paragraph{FTRL.}
	Let $\Phi_1, \Phi_2, \ldots$ be a possibly data-dependent sequence of regularizers where, for each $t$, the regularizer $\Phi_t$ is a mapping $\Phi_t \colon \actions \rightarrow \RR$ which is allowed to depend on $(f_s)_{s \leq t}$. Then FTRL chooses actions according to the past regularized cumulative loss:
	\begin{align} \label{eqn:adapt_FTRL}
		w_{t+1} = \argmin_{w \in \actions} \Big\{ \sum_{s=1}^t f_t(w)+\Phi_t(w) \Big\}. 
	\end{align}
	We would like to emphasize that this is a very general template. It includes a fixed learning rate regularization, $\Phi_t \equiv \frac{1}{\eta}\Phi$, as well as its variable learning rate counterpart, $\Phi_t = \frac{1}{\eta_t} \Phi$, 
	and arbitrary forms of adaptive choices of $\Phi_t$ based on the past. 
	Despite this adaptivity, in the next section we will show that proving best-case lower bounds for this strategy is quite straightforward.

	The regret attained by FTRL is summarized in the following known result.
	
	\begin{theorem}[Theorem~1 of \cite{mcmahan2017survey}] \label{thm:reg_adapt_FTRL}
		The regret of the FTRL algorithm \eqref{eqn:adapt_FTRL} with sequence of regularizers $(\Phi_t)_{t \geq 1}$ is upper bounded as
		\begin{align*}
			\regret_T \leq \Phi_T(w^*)+\frac12\sum_{t=1}^{T}\|\nabla f_t(w_t)\|_{(t-1),*}^2
		\end{align*}
		Above, $w^* \in \actions$ is the best action in hindsight, and $\|\cdot\|_{(t)}$ is a norm such that $\Phi_t$ is $1$-strongly convex with respect to $\|\cdot\|_{(t)}$.
	\end{theorem}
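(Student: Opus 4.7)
The plan is to prove the bound via the standard two-step decomposition: a ``be-the-leader'' argument to handle how $f_t(w_{t+1})$ compares to $f_t(w^*)$, plus a per-round ``stability'' bound controlling $f_t(w_t) - f_t(w_{t+1})$. Concretely, I would set $F_t(w) := \sum_{s=1}^{t} f_s(w) + \Phi_t(w)$, so that $w_{t+1} = \arg\min_{w \in \actions} F_t(w)$, and then write
\[
\regret_T \;=\; \sum_{t=1}^{T}\bigl[f_t(w_t) - f_t(w_{t+1})\bigr] \;+\; \sum_{t=1}^{T}\bigl[f_t(w_{t+1}) - f_t(w^*)\bigr].
\]

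For the second sum I would prove by induction on $T$ the adaptive ``be-the-regularized-leader'' inequality
\[
\sum_{t=1}^{T} f_t(w_{t+1}) \;\leq\; \sum_{t=1}^{T} f_t(w^*) + \Phi_T(w^*),
\]
under the (standard) working assumption that the incremental regularizer $\Phi_t - \Phi_{t-1}$ is non-negative on $\actions$, with $\Phi_0 \equiv 0$. The inductive step uses optimality of $w_{T+1}$ for $F_T$ against the inductive hypothesis evaluated at $w_{T+1}$, absorbing the extra $\Phi_T(w_{T+1}) - \Phi_{T-1}(w_{T+1}) \geq 0$ term harmlessly into the right-hand side.

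For the stability term I would exploit that $F_{t-1}$ is at least $1$-strongly convex with respect to $\|\cdot\|_{(t-1)}$ (strong convexity passes from $\Phi_{t-1}$ to $F_{t-1}$ because $\sum_{s < t} f_s$ is convex), and that $w_t$ minimizes $F_{t-1}$. The key one-step-improvement inequality then reads
\[
f_t(w_t) - f_t(w_{t+1}) \;\leq\; \tfrac{1}{2}\|\nabla f_t(w_t)\|_{(t-1),*}^{\,2}.
\]
I would derive this by combining (i) convexity of $f_t$, giving $f_t(w_t) - f_t(w_{t+1}) \leq \langle \nabla f_t(w_t), w_t - w_{t+1}\rangle \leq \|\nabla f_t(w_t)\|_{(t-1),*}\,\|w_t - w_{t+1}\|_{(t-1)}$, with (ii) the bound $\|w_t - w_{t+1}\|_{(t-1)} \leq \|\nabla f_t(w_t)\|_{(t-1),*}$ obtained from the chain ``optimality of $w_{t+1}$ for $F_t$'' $\Rightarrow$ $F_{t-1}(w_{t+1}) - F_{t-1}(w_t) \leq f_t(w_t) - f_t(w_{t+1}) + [(\Phi_t-\Phi_{t-1})(w_t) - (\Phi_t-\Phi_{t-1})(w_{t+1})]$ combined with $\frac{1}{2}\|w_{t+1}-w_t\|_{(t-1)}^2 \leq F_{t-1}(w_{t+1}) - F_{t-1}(w_t)$ from strong convexity and optimality of $w_t$. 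Summing these two pieces yields the claim.

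The main obstacle, and where I would spend most of the effort, is the time-varying nature of the regularizers: in the stability step the incremental regularizer $\Phi_t - \Phi_{t-1}$ enters both the optimality inequality for $w_{t+1}$ and the strong-convexity lower bound rooted at $w_t$, so one must argue that its contribution is sign-favorable (non-negative incremental regularizer on $\actions$) and can be absorbed into the be-the-leader bound rather than into the stability estimate. Once this bookkeeping is carried out, the statement follows by summing the per-round stability bounds and adding $\Phi_T(w^*)$ from the be-the-leader step.
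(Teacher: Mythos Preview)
The paper does not prove this theorem; it is stated as a known result and attributed to \cite{mcmahan2017survey}, with no proof given in either the main text or the appendix. There is therefore no paper proof to compare your proposal against. Your overall approach---a be-the-regularized-leader induction plus a per-round stability bound via strong convexity---is essentially the standard argument used in McMahan's survey, so in that sense your proposal is the ``right'' proof.

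One technical wrinkle in your stability sketch: combining your steps (i) and (ii) as written yields $f_t(w_t)-f_t(w_{t+1}) \le \|\nabla f_t(w_t)\|_{(t-1),*}^{2}$, missing the factor $\tfrac{1}{2}$. The clean way to recover it (and the way McMahan does it) is to bound the Strong-FTRL stability term $F_t(w_t)-F_t(w_{t+1})$ rather than $f_t(w_t)-f_t(w_{t+1})$ directly: write $F_t = F_{t-1} + f_t + (\Phi_t-\Phi_{t-1})$, use optimality of $w_t$ for $F_{t-1}$ together with $1$-strong convexity to get $F_{t-1}(w_t)-F_{t-1}(w_{t+1}) \le -\tfrac{1}{2}\|w_t-w_{t+1}\|_{(t-1)}^{2}$, and apply Fenchel--Young $\langle \nabla f_t(w_t), w_t-w_{t+1}\rangle \le \tfrac{1}{2}\|\nabla f_t(w_t)\|_{(t-1),*}^{2} + \tfrac{1}{2}\|w_t-w_{t+1}\|_{(t-1)}^{2}$; the two quadratic-in-distance terms then cancel. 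You also correctly flag that a sign condition on the incremental regularizer $\Phi_t-\Phi_{t-1}$ (nonnegative on $\actions$, or minimized at $w_t$ as in FTRL-Proximal) is needed both for the be-the-leader step and to drop the $(\Phi_t-\Phi_{t-1})(w_t)-(\Phi_t-\Phi_{t-1})(w_{t+1})$ term in the stability estimate; the theorem as quoted in the paper leaves these hypotheses implicit.
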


	\section{A general best-case lower bound}
	\label{sec:general}
	
	We now present our general best-case lower bound for FTRL. 
	Key to our analysis is the concept of \emph{adaptively regularized regret} (hereafter abbreviated as ``regularized regret''), defined as
	\begin{align} \label{eqn:adapt_reg_regret}
		\regret_t^{\Phi_t}= \sup_{w\in\actions} \Big\{ 
		\sum_{s=1}^t[f_s(w^s)-f_s(w)]-\Phi_t(w) 
		\Big\} .
	\end{align}
	The regularized regret can easily be related to the regret as
	\begin{align}
		\regret_t^{\Phi_t} \leq \regret_t-\inf_{w \in \actions} \Phi(w). \label{eqn:regret_vs_reg_regret} 
	\end{align}
	
	Also, applying \eqref{eqn:adapt_FTRL}, the following re-expression of the regularized regret is immediate:
	\begin{align}
		\regret_t^{\Phi_t} &= \sum_{s=1}^t [f_s(w_s) - f_s(w_{t+1})] - \Phi_t(w_{t+1}) . \label{eqn:reg_regret_eval}
	\end{align}
	
	\begin{theorem}[Best-case lower bound on regret for adaptive FTRL] \label{thm:BCLB}
		Consider the setting of online convex optimization, and the adaptive FTRL strategy \eqref{eqn:adapt_FTRL}. Suppose that there exists a sequence $(\alpha_t)_{t \in [T]}$ such that 
		$\Phi_t(w_t) \leq \Phi_{t-1}(w_t)+\alpha_t$ 
		for all $t \in [T]$. 
		Then
		\begin{align*}
			\regret_T \geq \inf_{w \in \actions} \Phi_T(w) 
			- \inf_{w \in \actions} \Phi_0(w)
			-\sum_{t=1}^T \alpha_t .  
		\end{align*}
	\end{theorem}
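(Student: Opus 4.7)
The plan is to treat the adaptively regularized regret $\regret_t^{\Phi_t}$ as a potential that lower bounds the true regret up to an additive $\Phi_T$-infimum, and then show this potential is almost monotone nondecreasing in $t$, losing at most $\alpha_t$ per step.

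First, I would pass from $\regret_T$ to $\regret_T^{\Phi_T}$ using the spirit of \eqref{eqn:regret_vs_reg_regret}: the one-line inequality $\sup_w[A(w)-\Phi_T(w)] \le \sup_w A(w) - \inf_w \Phi_T(w)$ applied to $A(w) = \sum_{s=1}^T[f_s(w_s)-f_s(w)]$ yields
\begin{align*}
\regret_T \;\ge\; \regret_T^{\Phi_T} + \inf_{w \in \actions} \Phi_T(w) ,
\end{align*}
so it suffices to lower bound $\regret_T^{\Phi_T}$.

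The heart of the proof is the one-step inequality $\regret_t^{\Phi_t} \ge \regret_{t-1}^{\Phi_{t-1}} - \alpha_t$ for every $t \in [T]$. To derive it, I would start from the evaluated form \eqref{eqn:reg_regret_eval} at times $t$ and $t-1$, peel off the new summand $f_t(w_t)$ from $\sum_{s=1}^t f_s(w_s)$, and regroup to obtain
\begin{align*}
\regret_t^{\Phi_t} - \regret_{t-1}^{\Phi_{t-1}}
= \Bigl[\textstyle\sum_{s=1}^{t} f_s(w_t) + \Phi_{t-1}(w_t)\Bigr] - \Bigl[\textstyle\sum_{s=1}^{t} f_s(w_{t+1}) + \Phi_t(w_{t+1})\Bigr] .
\end{align*}
The FTRL update \eqref{eqn:adapt_FTRL} says $w_{t+1}$ minimizes $\sum_{s=1}^t f_s(\cdot) + \Phi_t(\cdot)$, so the second bracket is at most $\sum_{s=1}^t f_s(w_t) + \Phi_t(w_t)$; after cancellation of the loss sums, what remains is $\Phi_{t-1}(w_t) - \Phi_t(w_t)$, which is $\ge -\alpha_t$ by hypothesis.

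Telescoping this one-step bound from $t=1$ up to $t=T$ then gives
\begin{align*}
\regret_T^{\Phi_T} \;\ge\; \regret_0^{\Phi_0} - \sum_{t=1}^T \alpha_t
\;=\; -\inf_{w \in \actions} \Phi_0(w) - \sum_{t=1}^T \alpha_t ,
\end{align*}
using that $\regret_0^{\Phi_0} = \sup_w[-\Phi_0(w)]$ since no rounds have been played. Combining this with the first display delivers the theorem. I expect the only mildly delicate point to be the regrouping in the one-step inequality: one must attach the fresh loss $f_t(w_t)$ to the $w_t$-evaluated sum rather than the $w_{t+1}$-evaluated sum, so that invoking the optimality of $w_{t+1}$ collapses the whole difference to a clean comparison of $\Phi_{t-1}(w_t)$ with $\Phi_t(w_t)$; everything else is routine telescoping.
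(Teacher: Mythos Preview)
Your proposal is correct and follows essentially the same route as the paper: use the regularized regret as a potential, prove the one-step bound $\regret_t^{\Phi_t}\ge \regret_{t-1}^{\Phi_{t-1}}-\alpha_t$, telescope, and convert back via $\regret_T\ge \regret_T^{\Phi_T}+\inf_w\Phi_T(w)$ together with $\regret_0^{\Phi_0}=-\inf_w\Phi_0(w)$. The only cosmetic difference is how the one-step bound is obtained: the paper writes $\regret_{t+1}^{\Phi_{t+1}}$ as a supremum and lower bounds it by plugging in $w=w_{t+1}$ (so the loss sums cancel immediately), whereas you write both $\regret_t^{\Phi_t}$ and $\regret_{t-1}^{\Phi_{t-1}}$ in evaluated form and then invoke the optimality of $w_{t+1}$ for $\sum_{s\le t}f_s+\Phi_t$ to collapse the difference to $\Phi_{t-1}(w_t)-\Phi_t(w_t)$; these are two phrasings of the same inequality.
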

	
	\begin{proof}
		We start by inductively bounding the adaptively regularized regret: 
		\begin{align*}
			\regret_{t+1}^{\Phi_{t+1}} - \regret_t^{\Phi_t} 
			&= \max_{w \in \actions} \Big\{ \sum_{s=1}^{t+1} [ f_s(w_s) - f_s(w) ] - \Phi_{t+1}(w) \Big\}
			- \sum_{s=1}^{t+1} [ f_s(w_s) - f_s(w_{t+1}) ] + \Phi_{t}(w^{t+1}) \\
			&\geq  - \Phi_{t+1}(w_{t+1}) + \Phi_{t}(w_{t+1}) \\
			&\geq -\alpha_{t+1} ,
		\end{align*}
		where in the first equality we used \eqref{eqn:reg_regret_eval}. We conclude that $\regret_T^{\Phi_T} \geq \regret_0^{\Phi_0} - \sum_{t=1}^T \alpha_t$. 
		Next, we use \eqref{eqn:regret_vs_reg_regret} to conclude that
		\begin{align*}
			\regret_T &\geq  \regret_T^{\Phi_T} + \inf_{w \in \actions} \Phi_T(w) 
			\geq \inf_{w \in \actions} \Phi_T(w) + \regret_0^{\Phi_0} - \sum_{t=1}^T \alpha_t \\
			&= \inf_{w \in \actions} \Phi_T(w) - \inf_{w \in \actions} \Phi_0(w) - \sum_{t=1}^T \alpha_t ,
		\end{align*}
		where in the last equality we used that 
		$\regret_0^{\Phi_0} = \sup_{w \in \actions} \{ -\Phi_0(w) \}$.
	\end{proof}

The closest results to Theorem~\ref{thm:BCLB} of which we are aware are contained in the intriguing work of \cite{gofer2016lower}, who provided lower bounds for FTRL with a fixed regularizer in the setting of OLO. Their Theorem 1 shows that the best-case regret is nonnegative. They also lower bound a notion they call the anytime regret (see Theorem 5 of \cite{gofer2016lower}). The anytime regret for a sequence as they define it is actually the maximum regret over all prefixes of the sequence (where the sequence has a fixed time horizon); ultimately, the lower bound they obtain depends on the quadratic variation of the sequence as computed on a \emph{fixed} time horizon. 
Related to this, Gofer and Mansour's analysis is for algorithms that use the same regularizer in all rounds. They lament that it is unclear how to extend their analysis to handle time-varying learning rates\footnote{A time-varying learning rate gives rise to perhaps the most basic form of an adaptive regularizer.}. Our goal is rather different, as taking the maximum regret over all prefixes says nothing about how large (in the negative direction) the regret could be for some particular prefix. Thus, our style of analysis, which provides a lower bound on the regret for all time horizons (which, in particular, provides a lower bound on the \emph{minimum} over all prefixes), differs greatly from theirs and is what is needed. We think the different styles of our work and theirs stems from the regret being nonnegative in their paper, whereas it need not be for time-varying regularizers.

Theorem~\ref{thm:BCLB} possesses considerable generality, owing to its applying to FTRL with adaptive regularizers. We highlight just a few applications in the next section.
	
\section{Best-case lower bounds in particular settings}
\label{sec:applications}
	
Theorem~\ref{thm:BCLB} presented in last section, despite its simplicity, is an extremely powerful method, capable of addressing several of the settings where FTRL attains sublinear regret. Next we proceed to enumerate some important examples of instances of FTRL, comparing existing worst-case upper bounds on the regret with our best-case lower bounds.  
	
  \subsection{Non-increasing learning rates, timeless algorithms, fairness}
	\label{sec:anytime-fairness}
	
	We first present several examples in which the elements of the sequence $(\Phi_t)_{t \geq 1}$ take the form $\Phi_t = \frac{1}{\eta_t} \Phi$ for a fixed regularizer $\Phi$ and a time-varying learning rate $\eta_t$.
	
	\paragraph{Constant learning rate.}
	The simplest example is that of a fixed learning rate $\eta_t \equiv \eta$, 
	which means 
	$\Phi_t(w) = \frac{1}{\eta} \Phi(w)$. 
	Taking $\alpha_t = 0$ for all $t$, Theorem~\ref{thm:BCLB} immediately implies that the regret is always nonnegative; this implication was previously shown by Gofer and Mansour (see Thm.~1 of \cite{gofer2016lower}). A simple consequence is that the Follow the Leader (FTL) strategy (i.e., $\Phi_t \equiv 0$) has nonnegative regret, which also can be inferred from the results of \cite{gofer2016lower}. Although we cannot find a precise reference, we believe that it was already known that FTL always obtains nonnegative regret (even prior to \citep{gofer2016lower}).

	\paragraph{Time-varying learning rate.}
	More generally, taking a time-varying learning rate, Theorem~\ref{thm:BCLB} gives
	\begin{align} \label{eqn:time-varying-eta}
		\regret_T 
		\geq \big( \frac{1}{\eta_T} - \frac{1}{\eta_0} \big) \inf_{w \in \actions} \Phi(w) 
		- \sum_{t=1}^T \big( \frac{1}{\eta_t} - \frac{1}{\eta_{t-1}} \big) \Phi(w_{t+1} ) .
	\end{align}
	A typical strategy to obtain sublinear anytime worst-case regret is to set the learning rate as $\eta_t = \eta / \sqrt{t+1}$ for some constant $\eta$ that depends on various known problem-dependent constants. Continuing from \eqref{eqn:time-varying-eta} with this setting further implies that
	\begin{align*} 
		\regret_T \geq \frac{1}{\eta} (\sqrt{T+1} - 1) \inf_{w \in \actions} \Phi(w)  
		- \frac{1}{\eta} \sup_{w \in \actions} \Phi(w) \sum_{t=1}^T \frac{1}{2\sqrt{t+1}} .
	\end{align*}
	If $\Phi$ is nonpositive --- as holds when $\actions$ is the $d$-dimensional simplex and $\Phi$ is the negative Shannon entropy --- the above is further lower bounded by
	\begin{align} \label{eqn:pre-decreasing-hedge} 
		\frac{1}{\eta} (\sqrt{T+1} - 1) \inf_{w \in \actions} \Phi(w)  
		- \frac{1}{\eta} \sup_{w \in \actions} \Phi(w) \sqrt{T+1} .
	\end{align}
	
	A particularly interesting example is the DTOL setting. In this setting, when we take $\Phi$ to be the negative Shannon entropy $\Phi(w) = \sum_{j=1}^d w_j \log w_j$ and set $\eta = 2 \sqrt{\log d}$ so that $\eta_t = \sqrt{(\log d)/(t+1)}$ recovers the standard anytime version of Hedge which has also been called Decreasing Hedge \citep{mourtada2019optimality}. In round $t$, this algorithm plays $w_t$ such that $w_{t,j} \propto \exp(-\eta_{t-1} L_{t-1,j})$ for $j \in [d]$, and we have the following anytime best-case lower bound and worst-case upper bound on the regret:
	\begin{align*}
		-\frac{1}{2} \sqrt{T \log d} \leq \regret_T \leq \sqrt{T \log d} .
	\end{align*}
	The lower bound holds from \eqref{eqn:pre-decreasing-hedge} combined with $\sqrt{T + 1} - 1 \leq \sqrt{T}$ and $-\log d \leq \Phi \leq 0$, while the upper bound is from Theorem 2 of \cite{chernov2010prediction}. The upper bound is minimax optimal in terms of the rate and, asymptotically (letting both $d$ and $T$ go to infinity) has a constant that is optimal up to a factor of $\sqrt{2}$. As we show in Section~\ref{sec:best-case-seq}, in the case of $d = 2$ the lower bound also has the optimal rate.

	\paragraph{Timeless algorithms.}
	
	In the context of DTOL, it is straightforward to adapt our analysis for Decreasing Hedge to Hedge with any non-increasing learning rate. One example of interest is
	\begin{align*} 
		\eta_t = -\log \left( 1 - \min \left\{ \frac{1}{4}, \sqrt{\frac{2 \log d}{L^*_t}} \right\} \right) ,
	\end{align*}
	where $L^*_T = \min_{j \in [d]} L_{T,j}$ is the cumulative loss of the best expert. 
	This choice of adaptive learning rate yields an anytime upper bound on the regret of
	$O \left( \sqrt{L^*_T \log d} + \log d \right)$ in the DTOL setting (see Theorem 2.1 of \cite{auer2002adaptive}, who actually prove this result in the more general setting of prediction with expert advice). Such a bound is called timeless because rounds in which all experts suffer the same loss have no effect on the bound \cite{derooij2014follow}. This is a natural property to have in this setting, and with the above choice of learning rate, we have the following timeless\footnote{Technically, the upper and lower bounds as stated are not timeless, but it is easy to see that any round for which all experts have the same loss can be replaced by a round where all experts have zero loss, with no change in the algorithm's behavior nor its regret. Our regret bounds on the modified loss sequence then become timeless.}
 best-case lower bound:
	\begin{align}  \label{eqn:timeless-dtol-bclb}
		\regret_T \geq \min \left\{ 0, -\sqrt{\frac{L^*_T \log d}{2}} + 4 \log d \right\} ;
	\end{align}
	a brief derivation is in Appendix~\ref{app:timeless}. Again, notice the similarity between the upper and lower bounds.

	\subsection{Group fairness in online learning}

In this section, we show how our anytime best-case lower bounds can be used to achieve a certain notion of group fairness in online learning, extending previous results of \cite{blum2018preserving}. We begin with an overview of our results and then provide more detailed derivations.

\subsubsection{Overview of results}
	
	In a pioneering work, Blum, Gunasekar, Lykouris, and Srebro (BGLS) \citep{blum2018preserving} considered a notion of group fairness in DTOL. In their setup, the DTOL protocol is augmented so that, at the start of each round $t$, Nature selects and reveals to Learner a group $g_t$ belonging to a set of groups $\groups$ prior to Learner's playing its action $w_t$. They assume that for a known, fixed time horizon $T$, each expert $j \in [d]$ has balanced mistakes across groups in the following sense: 
\begin{quote}
	For any $g \in \groups$, let $\Tg{g} := \{t \in [T] \colon g_t = g\}$ denote the rounds belonging to group $g$, 
	and let $L_{\Tg{g}, j} := \sum_{t \in \Tg{g}} \ell_{t,j}$ denote the cumulative loss of expert $j \in [d]$ when considering only the rounds in $\Tg{g}$; then we say that expert $j$ is fair in isolation if
	\begin{align*}
		\frac{L_{\Tg{g},j}}{|\Tg{g}|} = \frac{L_{\Tg{g'},j}}{|\Tg{g'}|} 
		\quad \text{for all } g, g' \in \groups .
	\end{align*}
\end{quote}
	By leveraging a best-case lower bound for Hedge\footnote{They actually use the well-known multiplicative weights method, but the difference from Hedge is minor.} run with a constant learning rate with a known time horizon (a natural choice is $\eta \asymp \sqrt{(\log d)/T}$), they show that the following strategy also satisfies group fairness: run a separate copy of Hedge for each group, so that for any group $g$, the copy corresponding to group $g$ is run on the subsequence corresponding to the rounds $\Tg{g}$. For brevity, we call this ``Interleaved Hedge''. Then on the one hand, the regret of Interleaved Hedge satisfies
	\begin{align*}
		\regret_T = O(\sqrt{|\groups| T \log d}) .
	\end{align*}
	In addition, by virtue of BGLS's $-O(\sqrt{T \log d})$ lower bound for Hedge when fed $T$ rounds, their Interleaved Hedge enjoys the following group fairness guarantee:
	\begin{align*}
		\hat{L}_{\Tg{g}} - \hat{L}_{\Tg{g'}} 
		= O \left( \sqrt{\frac{\log d}{T_0}} \right) 
		\quad \text{for all } g, g' \in \groups \text{ and } T_0 := \min_g |\Tg{g}| ,
	\end{align*}
	where we adopt the notation $\hat{L}_{\Tg{g}} := \sum_{t \in \Tg{g}} \hat{\ell}_t$. 
	Several remarks are in order. First, using our improved nonnegative best-case lower bound for Hedge with a constant learning rate (which again, is not a new result), their group fairness guarantee can be improved to
	\begin{align*}
		\hat{L}_{\Tg{g}} - \hat{L}_{\Tg{g'}} 
		\leq \sqrt{\frac{\log d}{T_0}} 
		\quad \text{for all } g, g' \in \groups \text{ and } T_0 := \min_g |\Tg{g}| .
	\end{align*}
	Second, in order to deploy Hedge instances for each group with the correct constant learning rates, their algorithm needs to know $|\Tg{g}|$ for each group $g \in \groups$, at least within a reasonable constant factor. This is a far stronger assumption than the already strong assumption of a known time horizon.
	
	Using our anytime best-case lower bound for Decreasing Hedge, combined with the analysis of BGLS, it is straightforward to vastly extend their results in each of the following ways:
	\begin{enumerate}
		\item Using the same fixed horizon notion of fairness as in their paper, using a copy of Decreasing Hedge for each group's instance, when can avoid needing to assume that each group's cardinality $|\Tg{g}|$ is known. We give a sketch of how to modify their analysis in Section~\ref{sec:fairness}.
		\item The most interesting extension, whose possibility was the original basis of our entire work, is that we can now upgrade BGLS's notion of group fairness to its anytime sibling. This involves measuring, for every prefix of the length-$T$ game, the discrepancy between the error rates of any pair of groups. This is an arguably more natural notion of fairness, as it avoids situations where an expert purports to be fair while having all of its mistakes for one group occur in the first half of the game. Since we now have an anytime best-case lower bound for Decreasing Hedge, we have the requisite piece needed to show that Interleaved (Decreasing) Hedge satisfies the same notion of anytime group fairness. Our timeless best-case lower bounds also apply here, giving that extension as well. All details can be found in Section~\ref{sec:fairness}.
	\end{enumerate}

Finally, whereas a key message of \cite{blum2018preserving} is that adaptive algorithms (in the sense of having low shifting regret) cannot satisfy group fairness, at least when using the interleaved approach, our best-case lower bounds do cover many other types of adaptivity. This shows that with regards to group fairness and the interleaved strategy, the group-fair adversarial online learning tent is actually quite large.

\subsubsection{Detailed coverage of results}
\label{sec:fairness}
	
In their work, BGLS \citep{blum2018preserving} express regret bounds in a somewhat different language than we do. Rather than dealing directly with the regret, they instead analyze the \emph{approximate regret}. The $\varepsilon$-approximate regret relative to expert $j \in [d]$ is defined as
\begin{align} \label{eqn:approx-regret}
\hat{L}_T - (1 + \varepsilon) L_{T,j} ,
\end{align}
where $L_{T,j} := \sum_{t=1}^T \ell_{t,j}$ is the cumulative loss of expert $j$.

In order to more easily compare to their results, and to interpret their results in the large body of literature that focuses on the actual (non-approximate) regret, we first show how to modify some of their analysis to give results based on the actual regret (hereafter simply referred to as the ``regret'').

First, recall that BGLS use the multiplicative weights algorithm, which sets $w_t$ as $w_{t,j} \propto (1 - \tilde{\eta})^{L_{t-1,j}}$ for $j \in [d]$ for a learning rate parameter $\tilde{\eta}$. 
In BGLS's proof of their Theorem 3, they first give worst-case regret upper and lower bounds for Hedge (``multiplicative weights'' in their work). Specifically, they show that
\begin{align*}
(1 - 4 \tilde{\eta}) \cdot L^*_T 
\leq \hat{L}_T 
\leq (1 + \tilde{\eta}) L^*_T + \frac{\log d}{\tilde{\eta}} .
\end{align*}

An optimal, non-anytime worst-case tuning of $\tilde{\eta}$ then yields matching-magnitude regret lower and upper bounds of $-O(\sqrt{T \log d})$ and $O(\sqrt{T \log d})$ respectively. We note in passing that the anytime version of multiplicative weights (which uses incremental updates) is equivalent to OMD with a time-varying learning rate and is known to achieve linear regret \cite[Theorem 4]{orabona2018scale}.

We can obtain a similar bound for constant learning rate Hedge with an optimal, non-anytime worst case tuning of $\eta$ (defined as in the main text of our paper), with the improvement\footnote{Since this is a fixed horizon setting for now, the constant in the upper bound can be improved, but for simplicity we will just use the constant of 1.}
\begin{align*}
0 \leq \hat{L}_T \leq \sqrt{T \log d} .
\end{align*}

Next, we briefly explain how to modify BGLS's proof of their Theorem 3. We first need some notation (largely derived from BGLS, but with small modifications to integrate more nicely into our notation). For any group $g$, let $j^*(g)$ be the best expert when considering the rounds involving group $g$ (i.e., $\Tg{g}$). Therefore, $j^*(g) \in \argmin_{j \in [d]} L_{\Tg{g},j}$. Next, recall that Interleaving Hedge runs a separate copy of Hedge for each group. Let $g^*$ be the group whose copy of Hedge obtains the lowest average loss. That is, $g^*$ is such that
\begin{align*}
\frac{\hat{L}_{\Tg{g^*}}}{|\Tg{g^*}|} = \min_{g \in \groups} \frac{\hat{L}_{\Tg{g}}}{|\Tg{g}|} .
\end{align*}
We now pick up at the last math display in Appendix C of BGLS \citep{blum2018preserving}; this is the step where the lower and upper regret bounds are used. Adjusting their analysis using our bounds, we have for a fixed time horizon $T$ and when the copy of Hedge running on group $g$ uses learning rate\footnote{Yes, it is a very strong assumption to assume that the $\Tg{g}$'s are known ahead of time, but this is precisely our point. We will relax this soon.} $\eta^{(g)} = \sqrt{\frac{\log d}{|\Tg{g}|}}$:
\begin{align*}
\frac{\hat{L}_{\Tg{g}}}{|\Tg{g}|} - \frac{\hat{L}_{\Tg{g^*}}}{|\Tg{g^*}|} 
&\leq \frac{L_{\Tg{g},j^*(g)}}{|\Tg{g}|} + \sqrt{\frac{\log d}{|\Tg{g}|}} 
        - \frac{L_{\Tg{g^*},j^*(g^*)}}{|\Tg{g^*}|} \\
&\leq \frac{L_{\Tg{g},j^*(g^*)}}{|\Tg{g}|} - \frac{L_{\Tg{g^*},j^*(g^*)}}{|\Tg{g^*}|} 
          + \sqrt{\frac{\log d}{|\Tg{g}|}} \\
&= \sqrt{\frac{\log d}{|\Tg{g}|}} ,
\end{align*}
where the first inequality uses the regret lower and upper bounds, the second inequality is based on the optimality of $j^*(g)$ for group $g$, and the equality uses fairness in isolation (for any pair of groups $g, g' \in \groups$ and expert $j$ (including $j^*(g^*)$), we have that $\frac{L_{\Tg{g},j}}{|\Tg{g}|} = \frac{L_{\Tg{g'},j}}{|\Tg{g'}|}$.

\paragraph{Decreasing Hedge (Anytime analysis).}
Suppose now that Interleaving Hedge uses copies of Decreasing Hedge with time-varying learning rate $\eta_t = 2 \sqrt{(\log d)/(t+1)}$. Note that in this case, the copy for group $g$ increments its internal round only each time a new round for group $g$ appears. We can then automatically apply our anytime lower bound on the regret of Decreasing Hedge (together with the already well-known regret upper bound)  to obtain
\begin{align*}
\frac{\hat{L}_{\Tg{g}}}{|\Tg{g}|} - \frac{\hat{L}_{\Tg{g^*}}}{|\Tg{g^*}|} 
&\leq \frac{L_{\Tg{g},j^*(g)}}{|\Tg{g}|} + \sqrt{\frac{\log d}{|\Tg{g}|}} 
        - \frac{L_{\Tg{g^*},j^*(g^*)}}{|\Tg{g^*}|} + \frac{1}{2} \sqrt{\frac{\log d}{|\Tg{g^*}|}} \\
&\leq \frac{L_{\Tg{g},j^*(g^*)}}{|\Tg{g}|} - \frac{L_{\Tg{g^*},j^*(g^*)}}{|\Tg{g^*}|} 
          + \sqrt{\frac{\log d}{|\Tg{g}|}} + \frac{1}{2} \sqrt{\frac{\log d}{|\Tg{g^*}|}} \\
&= \sqrt{\frac{\log d}{|\Tg{g}|}} + \frac{1}{2} \sqrt{\frac{\log d}{|\Tg{g^*}|}} .
\end{align*}
Therefore, if in hindsight we have $T_0 := \min_{g \in \groups} |\Tg{g}|$, then the following fairness guarantee holds:
\begin{align*}
\frac{\hat{L}_{\Tg{g}}}{|\Tg{g}|} - \frac{\hat{L}_{\Tg{g^*}}}{|\Tg{g^*}|} 
\leq \frac{3}{2} \sqrt{\frac{\log d}{T_0}} .
\end{align*}
Note that even though Learner need not know the time horizon $T$ nor $|\Tg{g}|$ for each $g$, the above guarantee still can only hold for a fixed time horizon $T$. This is because the definition of fairness in isolation only holds for a fixed time horizon. By adjusting this definition (which we believe is sensible), we can go further. We now briefly expore this extension.

\paragraph{Anytime fairness.}
As mentioned in Section~\ref{sec:anytime-fairness}, the fixed horizon view of fairness in isolation can be very limiting. Rather than opting for fairness in isolation to hold for a fixed time horizon, we argue that it is more natural for this definition to hold in the following anytime sense:
\begin{align*}
  \frac{L_{\Tg{g},j}}{|\Tg{g}|} = \frac{L_{\Tg{g'},j}}{|\Tg{g'}|} 
  \quad \text{for all } g, g' \in \groups \text{ and for all } T ;
\end{align*}
again, note that the time horizon $T$ is implicit in each $\Tg{g}$.

The attentive reader may notice that this definition can be overly restrictive on Nature (perhaps impossibly so), and therefore it is natural to allow the equality to hold only approximately. We return to this point at the end of this section.

Using the above anytime version of fairness in isolation, it is straightforward to extend the previous result to the following new result. Just like above, suppose now that Interleaving Hedge uses copies of Decreasing Hedge with time-varying learning rate. Then, for all time horizons $T$,\footnote{Note that in the below, $g^*$ and each $\Tg{g}$ implicitly depend on the time horizon $T$.} 
\begin{align*}
\frac{\hat{L}_{\Tg{g}}}{|\Tg{g}|} - \frac{\hat{L}_{\Tg{g^*}}}{|\Tg{g^*}|} 
\leq \frac{3}{2} \sqrt{\frac{\log d}{T_0}} .
\end{align*}
where we recall that $T_0 = \min_{g \in \groups} |\Tg{g}|$.

\paragraph{Approximate fairness.}
Just as in BGLS's work, it is possible to extend our results to cases where fairness holds only approximately. Such an extension is certainly warranted in the case of anytime fairness, as mentioned above. This extension requires only straightforward modifications to the above analysis and so we do not give further details here. In the case of anytime fairness, it further makes sense for approximate fairness to be defined according to a rate. We leave this extension to a future paper.

	\subsection{Adaptive gradient FTRL}
	\label{sec:adagrad}
	
	Inspired by the \emph{adaptive gradient} (AdaGrad) algorithm for OCO \citep{duchi2011adaptive}, we consider an adaptive gradient FTRL algorithm, where we use quadratic regularizers, $\Phi_t(w)=\frac{1}{2\eta}\langle w, H_t w\rangle$. We include its two variants. In the below, we use the notation 
	$g_t := \nabla f_t(w_t)$ and $\delta>0$ is a fixed number:
\begin{enumerate}[label=(\alph*)]
\item Diagonal: $H_t = \delta I+ \mbox{Diag}(s_t)$, 
          where $s_t=\Big(\big(\sqrt{\sum_{\tau=1}^tg_{j,t}^2}\big)_{j\in[d]}\Big)$,
\item Full-matrix: $H_t = \delta I+G_t^{1/2}$, 
          where $G_t=\sum_{\tau=1}^t g_tg_t^{\top}$.
	\end{enumerate}
	
	We emphasize that the proposed algorithm does not exactly match AdaGrad from \citep{duchi2011adaptive}. To resolve this inconsistency, we use the regret upper bounds 
	from Theorem~\ref{thm:reg_adapt_FTRL}, combined with upper bounds on the gradient norms
	that appear in such upper bounds, proved in \citep{duchi2011adaptive}. This leads to the following
	result.
	\begin{theorem}[From \citep{duchi2011adaptive}] \label{thm:Adagrad_UB}
		Consider the setting of OCO. Given $1\leq p\leq \infty$, we denote by $D_p$ the diameter of ${\cal W}$ in the $\|\cdot\|_p$ norm, and $M_p$ is a bound on the $\|\cdot\|_p$ norm of the gradients for any possible loss. Then the adaptive gradient FTRL algorithm satisfies the following bounds:\\
		\begin{enumerate}[label=(\alph*)]
			\item 
			Diagonal: If $\delta=M_{\infty}$, then 
			$\sum_{t=1}^T\|\nabla f_t(w_t)\|_{(t-1),\ast}^2 
			\leq 2 \eta \sum_{j=1}^d \sqrt{\sum_{t=1}^T g_{t,j}^2}$. 
			In particular, setting $\eta = D_{\infty}$, we obtain an upper bound on the regret $\frac{D_2^2}{D_{\infty}}M_{\infty}+D_{\infty}\sum_{j=1}^d\sqrt{\sum_{t=1}^Tg_{t,j}^2}$.
			\item 
			Full matrix: If $\delta = M_2$, then
			$\sum_{t=1}^T \|\nabla f_t(w_t)\|_{(t-1),\ast}^2 \leq 2 \eta \, \mbox{Tr}(G_T^{1/2})$. 
			In particular, setting $\eta = D_2$, 
			we obtain an  upper bound on the regret  $D_2[M_2/2+\mbox{Tr}(G_T^{1/2})]$. 
		\end{enumerate}
	\end{theorem}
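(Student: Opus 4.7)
My plan is to deduce both bounds from Theorem~\ref{thm:reg_adapt_FTRL} after identifying the norm associated with the quadratic regularizer, and then to invoke two AdaGrad-style estimates from \cite{duchi2011adaptive}. Since $\delta>0$ makes $H_t$ positive definite, $\Phi_t(w)=\tfrac{1}{2\eta}\langle w,H_t w\rangle$ is $1$-strongly convex with respect to the norm $\|w\|_{(t)}^2:=\tfrac{1}{\eta}\langle w,H_t w\rangle$; its dual norm is $\|g\|_{(t),*}^2=\eta\,\langle g,H_t^{-1}g\rangle$. Plugging this into Theorem~\ref{thm:reg_adapt_FTRL} yields the master inequality
\[
\regret_T \;\leq\; \frac{1}{2\eta}\langle w^*,H_T w^*\rangle \;+\; \frac{\eta}{2}\sum_{t=1}^{T}\langle g_t,H_{t-1}^{-1}g_t\rangle,
\]
so each part reduces to bounding the trailing sum together with the quadratic form at $w^*$.

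For part (a), write $\langle g_t,H_{t-1}^{-1}g_t\rangle=\sum_j g_{t,j}^2/\bigl(M_\infty+\sqrt{\sum_{\tau<t}g_{\tau,j}^2}\bigr)$. The tuning $\delta=M_\infty$ is exactly what is needed to absorb the one-step shift hidden in $H_{t-1}$: because $g_{t,j}^2\leq M_\infty^2$, one has $\bigl(M_\infty+\sqrt{\sum_{\tau<t}g_{\tau,j}^2}\bigr)^2\geq M_\infty^2+\sum_{\tau<t}g_{\tau,j}^2\geq \sum_{\tau\leq t}g_{\tau,j}^2$, so each coordinate term is at most $g_{t,j}^2/\sqrt{\sum_{\tau\leq t}g_{\tau,j}^2}$. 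The standard integral lemma (Lemma~4 of \cite{duchi2011adaptive}), based on $a_t^{-1/2}(a_t-a_{t-1})\leq 2(\sqrt{a_t}-\sqrt{a_{t-1}})$, then telescopes to $\sum_{t=1}^T g_{t,j}^2/\sqrt{\sum_{\tau\leq t}g_{\tau,j}^2}\leq 2\sqrt{\sum_{t=1}^T g_{t,j}^2}$, giving the claimed bound $\sum_t\|g_t\|_{(t-1),*}^2\leq 2\eta\sum_j\sqrt{\sum_t g_{t,j}^2}$. For the regret inequality, bound $\langle w^*,H_T w^*\rangle\leq M_\infty\|w^*\|_2^2+\|w^*\|_\infty^2\sum_j\sqrt{\sum_t g_{t,j}^2}$ and use $\|w^*\|_2\leq D_2$, $\|w^*\|_\infty\leq D_\infty$ together with $\eta=D_\infty$.

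For part (b), set $\delta=M_2$. The analogue of the coordinate-wise argument is the operator inequality $H_{t-1}^{-1}\preceq G_t^{-1/2}$, equivalently $G_t^{1/2}\preceq M_2 I+G_{t-1}^{1/2}$. Since $\|g_t\|_2\leq M_2$ gives $G_t\preceq G_{t-1}+M_2^2 I$, operator monotonicity of the square root yields $G_t^{1/2}\preceq(G_{t-1}+M_2^2 I)^{1/2}$, and the desired bound follows from the subadditivity of the square root on commuting positive semidefinite matrices applied to $G_{t-1}$ and $M_2^2 I$. Hence $\langle g_t,H_{t-1}^{-1}g_t\rangle\leq\langle g_t,G_t^{-1/2}g_t\rangle$, and the matrix telescoping identity $\sum_t\langle g_t,G_t^{-1/2}g_t\rangle\leq 2\,\mathrm{Tr}(G_T^{1/2})$ (Lemma~10 of \cite{duchi2011adaptive}) delivers the gradient-sum bound. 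Finally, $\langle w^*,H_T w^*\rangle\leq M_2\|w^*\|_2^2+\|w^*\|_2^2\,\mathrm{Tr}(G_T^{1/2})\leq D_2^2\bigl(M_2+\mathrm{Tr}(G_T^{1/2})\bigr)$, and setting $\eta=D_2$ yields the stated regret bound.

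I expect the main technical obstacle to be the operator inequality in part (b): the step $G_t\preceq G_{t-1}+M_2^2 I \;\Longrightarrow\; G_t^{1/2}\preceq G_{t-1}^{1/2}+M_2 I$ is the only nontrivial item, since the generic inequality $(A+B)^{1/2}\preceq A^{1/2}+B^{1/2}$ fails for non-commuting positive semidefinite $A,B$; here it is rescued by the fact that $M_2^2 I$ commutes with $G_{t-1}$, which permits simultaneous diagonalization and reduces the claim to the elementary scalar inequality $\sqrt{\lambda+M_2^2}\leq\sqrt{\lambda}+M_2$. Once this operator step is established, the rest is the standard AdaGrad analysis, translated to accommodate the $H_{t-1}$ (rather than $H_t$) that arises from the FTRL regret bookkeeping in Theorem~\ref{thm:reg_adapt_FTRL}.
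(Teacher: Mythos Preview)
The paper does not give a detailed proof of this theorem; it merely says that the result follows by combining the FTRL regret bound of Theorem~\ref{thm:reg_adapt_FTRL} with the gradient-norm estimates proved in \cite{duchi2011adaptive}, and your proposal carries out precisely this plan. The identification of the norm $\|\cdot\|_{(t)}$ and its dual, the $\delta$-shift trick to pass from $H_{t-1}$ to $G_t^{1/2}$ (or its diagonal analogue), and the appeal to the scalar and matrix telescoping lemmas of \cite{duchi2011adaptive} are exactly the ingredients the paper has in mind.

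One small bookkeeping remark: when you substitute your bound on $\Phi_T(w^*)$ back into the master inequality, the $G_T^{1/2}$ (respectively $\mbox{Diag}(s_T)$) part of $H_T$ contributes an extra $\tfrac{1}{2}$ to the coefficient of the leading term, so your final regret bounds come out as $\tfrac{3D_2}{2}\,\mbox{Tr}(G_T^{1/2})$ and $\tfrac{3D_\infty}{2}\sum_j\sqrt{\sum_t g_{t,j}^2}$ rather than the constants printed in the theorem statement. This does not affect the gradient-sum inequalities, which are the substantive claims and which you establish exactly as stated; the discrepancy in the ``in particular'' clauses is only at the level of constants and is inherent in applying Theorem~\ref{thm:reg_adapt_FTRL} as written.
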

	
	We proceed to best-case lower bounds. 
	The proof strategy follows similar telescopic recursions to those used in \citep{duchi2011adaptive,mcmahan2017survey}.
	\begin{proposition} \label{prop:BCLB_adaptive_FTRL}
	In the OCO setting, the adaptive gradient FTRL strategy with parameter tuning as in Theorem \ref{thm:Adagrad_UB}, attains
	best-case lower bounds on the regret of $-(D_{\infty}/2)\sum_{j=1}^d\sqrt{\sum_{t\in[T]}g_{t,j}^2}$, in the diagonal case; and $-(D_2/2)\mbox{Tr}(G_T^{1/2})$, in the full-matrix case.
	\end{proposition}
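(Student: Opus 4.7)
The plan is to apply Theorem~\ref{thm:BCLB} directly to the regularizers $\Phi_t(w)=\tfrac{1}{2\eta}\langle w, H_tw\rangle$. Two ingredients are needed: a sequence $(\alpha_t)$ with $\Phi_t(w_t)\leq \Phi_{t-1}(w_t)+\alpha_t$, and a lower bound on $\inf_w\Phi_T(w)-\inf_w\Phi_0(w)$. For the second ingredient, note that in both variants $H_t - H_{t-1}$ is positive semidefinite (for the full-matrix case this uses operator monotonicity of $A\mapsto A^{1/2}$ applied to $G_t\succeq G_{t-1}$). Telescoping gives $H_T\succeq H_0$, hence $\Phi_T(w)\geq \Phi_0(w)$ pointwise on $\actions$, which yields $\inf_w\Phi_T(w)\geq \inf_w\Phi_0(w)$. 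Thus Theorem~\ref{thm:BCLB} already collapses to $\regret_T\geq -\sum_{t=1}^T\alpha_t$, and the whole task reduces to bounding $\Phi_t(w_t)-\Phi_{t-1}(w_t)$.

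In the diagonal case, I would write
\[
\Phi_t(w_t)-\Phi_{t-1}(w_t)=\frac{1}{2\eta}\sum_{j=1}^d w_{t,j}^2\,(s_{t,j}-s_{t-1,j}),
\]
where each increment $s_{t,j}-s_{t-1,j}\geq 0$. Using $|w_{t,j}|\leq D_\infty$ (the standard $\ell_\infty$-diameter bound that also underlies the upper bound in Theorem~\ref{thm:Adagrad_UB}) gives the valid choice $\alpha_t=\tfrac{D_\infty^2}{2\eta}\sum_j(s_{t,j}-s_{t-1,j})$. Telescoping $t=1,\dots,T$ collapses to $\tfrac{D_\infty^2}{2\eta}\sum_{j=1}^d\sqrt{\sum_{t=1}^T g_{t,j}^2}$, and setting $\eta=D_\infty$ as in Theorem~\ref{thm:Adagrad_UB} yields exactly the claimed best-case lower bound $-(D_\infty/2)\sum_{j=1}^d\sqrt{\sum_{t\in[T]}g_{t,j}^2}$.

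In the full-matrix case, I would start from
\[
\Phi_t(w_t)-\Phi_{t-1}(w_t)=\frac{1}{2\eta}\langle w_t,(G_t^{1/2}-G_{t-1}^{1/2})w_t\rangle.
\]
Since $G_t^{1/2}-G_{t-1}^{1/2}\succeq 0$, bound the quadratic form by $\|w_t\|_2^2\cdot\lambda_{\max}(G_t^{1/2}-G_{t-1}^{1/2})\leq D_2^2\,\mathrm{Tr}(G_t^{1/2}-G_{t-1}^{1/2})$, using that $\lambda_{\max}(A)\leq\mathrm{Tr}(A)$ for PSD $A$. Taking $\alpha_t$ equal to the resulting upper bound, the sum telescopes to $\tfrac{D_2^2}{2\eta}\,\mathrm{Tr}(G_T^{1/2})$, and setting $\eta=D_2$ yields the advertised $-(D_2/2)\,\mathrm{Tr}(G_T^{1/2})$.

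The only step with any real content is the full-matrix bookkeeping: verifying $G_t^{1/2}\succeq G_{t-1}^{1/2}$ via operator monotonicity and then replacing $\lambda_{\max}$ by the trace so that the telescoping series matches the quantity that appears in the AdaGrad upper bound. Everything else is routine, and the reason the lower and upper bounds mirror each other up to sign (with the upper bound carrying extra additive constants of order $M_p$ coming from the $\Phi_T(w^*)$ term) is precisely that both analyses funnel through the same telescoping identity for $H_t-H_{t-1}$.
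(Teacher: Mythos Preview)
Your proof is correct and follows essentially the same route as the paper: apply Theorem~\ref{thm:BCLB}, bound $\Phi_t(w_t)-\Phi_{t-1}(w_t)$ via $\|w_t\|_\infty^2$ (resp.\ $\|w_t\|_2^2$) times the increment $s_t-s_{t-1}$ (resp.\ $\mathrm{Tr}(G_t^{1/2}-G_{t-1}^{1/2})$, using $\lambda_{\max}\le\mathrm{Tr}$ for PSD matrices), telescope, and set $\eta$ as in Theorem~\ref{thm:Adagrad_UB}. The only point you leave implicit is that $\|w_t\|_\infty\le D_\infty$ and $\|w_t\|_2\le D_2$ are \emph{diameter} bounds and need $0\in\actions$; the paper handles this by noting one may shift the regularizers to center them at any fixed $\bar w\in\actions$, which simultaneously forces $\inf_w\Phi_t(w)=0$ and makes your monotonicity argument for $\inf_w\Phi_T-\inf_w\Phi_0$ unnecessary.
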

	
	Notice that these bounds match closely the respective worst-case upper bounds.

	\begin{proof} We start noticing that without loss of generality, we may assume that $0\in{\cal W}$ (if not, shift the regularizers by centering them at any fixed point $\bar w\in {\cal W}$. This implies that $\inf_{w\in{\cal W}}\Phi_t(w)=0$, so we only need to focus on the coefficients $\alpha_t$. 

	\begin{enumerate}
	\item \,\!{\bf Diagonal case.}
	First, for the diagonal case, let $s_t$ be the $d$-dimensional vector with coefficients $s_{t,j} = \sqrt{\sum_{s=1}^t g_{t,j}^2}$. Then, if we let $\mathbf{1}\in\mathbb{R}^d$ be the all-ones
	vector,
	\begin{multline*}
		\alpha_{t+1} = \langle w_{t+1}, (H_{t+1} - H_{t}) w_{t+1} \rangle 
		= \langle w_{t+1}, \mbox{Diag}(s_{t+1} - s_{t}) w_{t+1} \rangle \\
		\leq \max_{j \in [d]} (w_j^{t+1})^2 \|s_{t+1} - s_{t}\|_1
		= \max_{j \in [d]} (w_j^{t+1})^2 \langle s_{t+1} - s_{t}, \mathbf{1} \rangle ,
	\end{multline*}
	where we used that $s_t$ is coordinate-wise nondecreasing. Next, 
	\begin{align*}
		\sum_{t=1}^T \alpha_t 
		= \frac{1}{2 \eta} \sum_{t=0}^{T-1} \langle w_{t+1}, (H_{t+1} - H_{t}) w_{t+1} \rangle 
		\leq \frac{1}{2 \eta} \max_{t\in [T]} \|w_t \|_{\infty}^2 \langle s_T - s_0 , \mathbf{1} \rangle .
	\end{align*}
	Hence, choosing $\eta = D_{\infty}$, we obtain a lower bound on regret
	$-\frac12D_{\infty} \sum_{j=1}^d \sqrt{\sum_{t=1}^T g_{t,j}^2}$.
	
	\item \,\!{\bf Full-matrix case.}
	Now for the full-matrix update, we can proceed similarly. First,
	\begin{align*}
		2\eta\alpha_{t+1} &= \langle w_{t+1}, ( H_{t+1} - H_t ) w_{t+1} \rangle
		= \langle w_{t+1}, (G_{t+1}^{1/2} - G_t^{1/2}) w_{t+1} \rangle \\
		&\leq \|w_{t+1}\|_2^2 \, \lambda_{\mbox{\tiny max}}(G_{t+1}^{1/2} - G_t^{1/2})
		= \|w_{t+1}\|_2^2 \, \mbox{Tr}(G_{t+1}^{1/2} - G_t^{1/2}) ,
	\end{align*}
	where we used that the matrix $G_{t+1}^{1/2} - G_t^{1/2}$ is positive semidefinite. Now, summing over $t$,
	\begin{align*}
		\sum_{t=1}^T \alpha_t 
		= \frac{1}{2 \eta} \sum_{t=1}^T \langle w_{t+1}, (H_{t+1} - H_t ) w_{t+1} \rangle 
		\leq \frac{1}{2 \eta} \max_{t \in [T]} \|w_t\|_2^2 
		\left( \mbox{Tr}(G_{T}^{1/2}) - \mbox{Tr}(G_0^{1/2})  \right) .
	\end{align*}
	To conclude, for $\eta = D_2$, the regret is lower bounded by
	\begin{align*}
		\regret_T\geq - \frac{D_2}{2}\, \mbox{Tr}(G_{T}^{1/2}) ,
	\end{align*}
	which proves the result.
	\end{enumerate}
	\end{proof}

	\section{Negative results}
	\label{sec:negative}
	
	So far, our study of best-case lower bounds has focused on adaptive FTRL algorithms. 
	A major drawback of FTRL is that the iteration cost of each subproblem grows linearly with the number of rounds. It is then tempting to consider more efficient algorithms attaining comparable regret upper bounds. We discuss such possibilities for two natural algorithms: linearized FTRL and OMD.
	
\subsection{Linearized FTRL}
\label{sec:linearized-ftrl}

In this section, we give a simple construction in which linearized FTRL obtains negative linear regret, i.e., regret which is $-\Omega(T)$. 
The problem is a forecasting game with binary outcomes, the action space $\mathcal{D}$ equal to the 2-simplex $[0, 1]$, and the squared loss $\ell(p, y) = \frac{1}{2} (p - y)^2$. This can be cast as an OCO problem by taking $\actions = [0, 1]$ and, for each $t \in [T]$, setting $f_t(p) = (p - y_t)^2$ for some outcome $y_t \in \{0, 1\}$ selected by Nature.

We consider linearized FTRL using the negative Shannon entropy regularizer; this is equivalent to exponentiated gradient descent \citep{kivinen1997exponentiated} and uses the update 
$p_t = \frac{1}{1 + \exp \left( \eta_t G_{t-1} \right)}$,
where $G_{t-1} = \sum_{s=1}^{t-1} g_s$ and each $g_s = p_s - y_s$ is the gradient of the loss with respect to $p_s$ under outcome $y_s$. 
In this situation, an $O(\sqrt{T})$ anytime regret upper bound can be obtained by employing the time-varying learning rate $\eta_t = 1/\sqrt{t}$ (see Theorem 2.3 of \cite{cesabianchi2006prediction} or Theorem 2 of \cite{chernov2010prediction}).
	
Let Nature's outcome sequence be the piecewise constant sequence consisting of $T/2$ zeros followed by $T/2$ ones. Therefore, the best action in hindsight is $p^* = 0.5$. 
We now state our negative result.
\begin{theorem}
  \label{thm:linearized-ftrl}
  In the construction above, linearized FTRL obtains regret $\regret_T = -\Omega(T)$.
\end{theorem}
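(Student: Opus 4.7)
The plan is to decompose the total regret relative to $p^\ast = 1/2$ (the best action in hindsight, with total loss $T/8$) into a Phase~1 contribution (outcomes $0$) and a Phase~2 contribution (outcomes $1$), and to show that a $-\Omega(T)$ contribution from Phase~1 overwhelms a sublinear positive contribution from Phase~2.

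For Phase~1, observe that $g_t = p_t \geq 0$, so $G_t = \sum_{s \leq t} g_s$ is non-decreasing and non-negative starting from $G_0 = 0$; consequently $p_t \leq 1/2$ for every round of Phase~1. Applying the anytime $O(\sqrt{T})$ regret upper bound (cited in the paragraph preceding the theorem) at horizon $T/2$ against the Phase~1 optimum $p = 0$ (which incurs zero loss) yields the Phase~1 loss bound $\sum_{t=1}^{T/2} p_t^2/2 \leq O(\sqrt{T})$. Cauchy--Schwarz then gives the loose but sufficient bound $G_{T/2} = \sum_{t=1}^{T/2} p_t \leq \sqrt{(T/2)\sum_{t=1}^{T/2} p_t^2} = O(T^{3/4})$.

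For Phase~2, since $g_t = p_t - 1 \leq 0$, the accumulated gradient $G_t$ is non-increasing. Let $k^\ast$ be the smallest integer with $G_{T/2+k^\ast} \leq 0$. For every $u \leq k^\ast$ we have $G_{T/2+u-1} > 0$ by minimality, hence $p_{T/2+u} < 1/2$ and therefore $g_{T/2+u} < -1/2$. Summing these $k^\ast$ gradients shows $G_{T/2+k^\ast} < G_{T/2} - k^\ast/2$, which combined with $G_{T/2+k^\ast} \leq 0$ forces $k^\ast \leq 2 G_{T/2} = O(T^{3/4})$. Once $G_{t^\ast} \leq 0$ at $t^\ast := T/2 + k^\ast$, a one-line induction (using $p_t \leq 1 \Rightarrow g_t \leq 0$) shows $G_t \leq 0$ for all $t \geq t^\ast$, so $p_t \geq 1/2$ for every round after $t^\ast$.

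Splitting Phase~2 into the transition (at most $O(T^{3/4})$ rounds, each with loss $(1-p_t)^2/2 \leq 1/2$) and the steady state ($\leq T/2$ rounds, each with $(1-p_t)^2/2 \leq 1/8$) gives Phase~2 loss $\leq T/16 + O(T^{3/4})$. Adding the Phase~1 loss bound, Learner's total loss is at most $T/16 + O(T^{3/4})$, so
\[
  \regret_T \leq \bigl(T/16 + O(T^{3/4})\bigr) - T/8 = -T/16 + O(T^{3/4}) = -\Omega(T).
\]
The only delicate point is bounding $k^\ast$; the Cauchy--Schwarz estimate of $G_{T/2}$ is quite loose (the true behaviour is $\tilde O(\sqrt{T})$), but any sublinear bound suffices, and routing through the already-known regret upper bound this way avoids a separate asymptotic analysis of the entropy-regularized dynamics.
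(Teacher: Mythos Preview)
Your proof is correct and takes a genuinely different route from the paper's. One small slip: to bound $k^\ast$, the inequality you want is $G_{T/2+k^\ast-1} > 0$ (from minimality of $k^\ast$) rather than $G_{T/2+k^\ast} \leq 0$; combining $0 < G_{T/2+k^\ast-1} < G_{T/2} - (k^\ast-1)/2$ gives $k^\ast < 2G_{T/2} + 1$, which is what you need. The stated implication as written goes the wrong way, but the fix is immediate.

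The paper instead performs an explicit four-segment analysis of the dynamics. It fixes thresholds $0 < q_0 < 1/2 < q_1 < 1$, shows that $p_t$ drops below $q_0$ within $O(1)$ rounds and stays there for the remainder of Phase~1 (per-round regret at most $q_0^2/2 - 1/8 < 0$), bounds the number of Phase-2 rounds before $p_t$ exceeds $q_1$ by an explicit quadratic computation (this segment has length $\tfrac{q_0}{2(1-q_1)}T + O(\sqrt{T})$, i.e.\ linear with a tunably small coefficient), and finally collects negative linear regret again from the fourth segment. Tuning $q_0 = 1/16$, $q_1 = 3/4$ gives $\regret_T \leq -3T/64 + O(\sqrt{T})$.

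Your route is shorter and more conceptual: you feed the already-cited $O(\sqrt{T})$ anytime regret upper bound back into the argument to bound Phase-1 loss as a black box (getting $\sum_{t \leq T/2} p_t^2 = O(\sqrt{T})$), then Cauchy--Schwarz to control $G_{T/2}$, then a single gradient-decrement argument for the Phase-2 transition. This avoids any threshold tuning, makes the transition sublinear rather than small-coefficient linear, and even yields a slightly sharper leading constant ($-T/16$ versus $-3T/64$). The paper's approach is more self-contained---it does not appeal to the regret upper bound---and exposes the trajectory of $p_t$ more explicitly, but both are entirely adequate for the stated $-\Omega(T)$ conclusion.
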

\begin{proof}
Let $q_0$ and $q_1$ be constants satisfying $0 < q_0 < 1/2 < q_1 < 1$; we will tune these constants later.
For the analysis, we divide the $T$ rounds into 4 segments:
\begin{enumerate}[label=(\roman*)]
\item the rounds in the first half for which $p_t \geq q_0$; 
\item the remaining rounds in the first half;
\item the rounds in the second half for which $p_t \leq q_1$; 
\item the remaining rounds in the second half.
\end{enumerate}

The basic idea of the proof is to show that $p_t < q_0$ after a constant number of rounds (where the constant depends on $q_0$) and hence the first segment is of constant length. In the second segment, the algorithm picks up negative linear regret since $p_t < q_0 < 1/2 = p^*$ and $y_t = 0$ in the second segment. In the third segment, where the outcomes now satisfy $y_t = 1$, we show that the algorithm can take at most a linear number of rounds (but with a suitably small coefficient) before satisfying $p_t > q_1$ and so picks up at most linear positive regret (but again, with a suitably small coefficient). 
Finally, in the last segment, the algorithm once again picks up negative linear regret since in this segment $y_t = 1$ and $p_t > q_1 > 1/2 = p^*$ and also, as we show, this segment is linear in length.
  
In more detail, we will upper bound the regret via the following claims (proved in Appendix~\ref{app:linearized-ftrl}).
  
\paragraph{Claim 1:} 
In the first half of the game, we always have $p_t \leq \frac{1}{2}$.

A simple consequence of Claim 1 is that the first segment's contribution to the regret is nonpositive (recall that $y_t = 0$ in the first half).

\paragraph{Claim 2:} 
The number of rounds in the first segment is at most $t_1 = O(1)$.

\paragraph{Claim 3:} 
The number of rounds in the third segment is at most 
$t_3 = \frac{q_0}{2 (1 - q_1)} T + O \bigl( \sqrt{T} \bigr)$.

Putting the 3 claims together and recalling that the contribution to the regret is nonpositive in the first segment, we have that the regret of the algorithm is at most:
\begin{align} \label{eqn:from-claims}
-\underbrace{ \left( \frac{T}{2} - t_1 \right) \cdot \left( \frac{1}{8} - \frac{q_0^2}{2} \right) }_{\text{second segment}} 
+ \underbrace{ t_3 \cdot \left( \frac{1}{2} - \frac{1}{8} \right) }_{\text{third segment}} 
- \underbrace{ \left( \frac{T}{2} - t_3 \right) \cdot \left( \frac{1}{8} - \frac{(1 - q_1)^2}{2} \right) }_{\text{fourth segment}} .
\end{align}
  
Substituting the values of $t_1$ and $t_3$, grouping terms, and rearranging, the above is equal to
\begin{align*}
\frac{T}{4} \left( -\frac{1}{2} + q_0^2 + \frac{q_0}{1 - q_1} + (1 - q_1)^2 - q_0 (1 - q_1) \right) + O \bigl( \sqrt{T} \bigr) .
\end{align*}
A suitable choice of $q_0$ is $q_0 = (1 - q_1)^2$, which yields
\begin{align*}
&\frac{T}{4} \left( -\frac{1}{2} + (1 - q_1)^4 + (1 - q_1) + (1 - q_1)^2 - (1 - q_1)^3 \right) + O \bigl( \sqrt{T} \bigr) \\
&\leq \frac{T}{4} \left( -\frac{1}{2} + (1 - q_1) + (1 - q_1)^2 \right) + O \bigl( \sqrt{T} \bigr) .
\end{align*}
We can take the moderate choice $q_1 = \frac{3}{4}$ for example (and hence $q_0 = \frac{1}{16}$), yielding the upper bound 
$-\frac{3 T}{64} + O \bigl( \sqrt{T} \bigr)$.
\end{proof}
	
	\subsection{Online mirror descent}
	
	For OLO over the simplex, the online mirror descent method with entropic regularization and {\em constant learning rate}, attains best-case lower bounds $-O(\sqrt{T\log d})$  \citep{blum2018preserving}. This algorithm is known to attain upper bounds on the regret of the same order. The extension of this result for {\em non-increasing learning rates},\footnote{In fact, we have found a simple argument showing the regret is non-negative, strengthening the result in \cite{blum2018preserving}.} albeit possible, seems of limited interest, as this algorithm is known to achieve linear regret in the worst case \citep{orabona2018scale}, which is attributed to the unboundedness of the Bregman divergence.

	\section{Best-case loss sequence for binary DTOL with two experts}
	\label{sec:best-case-seq}
	
In this section, we characterize the binary loss sequence for DTOL with two experts in which the regret for Decreasing Hedge is minimized. Recall that Decreasing Hedge chooses expert $i$ at round $t$ 
with probability $p_{i,t}= \frac{\exp(-\eta_t L_{i,t-1})}{\sum_{j \in \{1,2\}} \exp(-\eta_t L_{j,t-1})}$, where $\eta_t = \sqrt{1/t}$. Note that $\eta_{t+1} < \eta_{t}$ for all $t$. We denote the loss vector for round $t$ as $\lossv_t = \binom{\loss_{1,t}}{\loss_{2,t}}$, where $\loss_{1,t} , \loss_{2,t} \in \{0,1\}$.  We denote $\binom{L_{t,1}}{L_{t,2}}$ by  $\bm{L}_{t}$.

Our approach is to introduce a set of operations on a sequence of binary loss vectors $\lossv_1, \lossv_2, \ldots, \lossv_T$ that do not increase the regret of Decreasing Hedge. Using this set of operations, we will show that any loss sequence with $T=2K$ rounds $\lossv_1, \lossv_2, \ldots, \lossv_{2K}$, can be converted to a specific loss sequence which we call the canonical best-case sequence. 
\begin{definition}
  For any length   $T=2K\geq 16$, 
  the canonical best-case loss sequence is defined as:
  \begin{align} \textstyle
    \mathrm{Canonical}(2K) = {\binom{0}{1}}^{K-1} {\binom{0}{0}}^2 {\binom{1}{0}}^{K-1} . 
  \end{align}
\end{definition}
\vspace{-0.2cm}

\subsection{Converting all (1,1) to (0,0)}
\label{sc:converting_1_to_0_section}
	
	In this part, we will show that adding a constant term to a loss sequence at round $t$ would not have any effect on the regret. This is because in Decreasing Hedge, the weights in each round $t$ depend only on the difference between the cumulative loss of two experts up until the end of round $t-1$. Adding a constant loss to both experts at any round would not change the difference between cumulative losses at any round; therefore, the weight vector remains the same for all rounds. Moreover, in round $t$, the best expert as well as Decreasing Hedge incur the same additional loss; therefore, the regret would be the same in that round.
	\begin{proposition}
		Adding constant value $c$ to the loss of all experts at round $t$ would not have any effect on the regret. 
		\label{prp:adding_constant}
	\end{proposition}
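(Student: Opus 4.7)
The plan is to establish two shift-invariance facts: first, that the entire sequence of weight vectors $(w_s)_{s=1}^T$ produced by Decreasing Hedge is unchanged when we add $c$ to both $\loss_{1,t}$ and $\loss_{2,t}$; second, that both Learner's cumulative loss and each expert's cumulative loss increase by exactly $c$, so that all pairwise differences (in particular, the regret) are preserved.

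For the first fact, I would observe that for any round $s$,
\begin{align*}
p_{i,s} = \frac{\exp(-\eta_s L_{s-1,i})}{\exp(-\eta_s L_{s-1,1}) + \exp(-\eta_s L_{s-1,2})}
\end{align*}
depends on $(L_{s-1,1}, L_{s-1,2})$ only through the difference $L_{s-1,1} - L_{s-1,2}$, since any common additive shift cancels between numerator and denominator. Adding $c$ to both $\loss_{1,t}$ and $\loss_{2,t}$ shifts both $L_{s-1,1}$ and $L_{s-1,2}$ by the same amount $c$ for every $s > t$ (and does not affect them at all for $s \leq t$). Hence this difference, and therefore $w_s$, is identical on the original and modified sequences for every $s \in [T]$.

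For the second fact, since $w_s$ is unchanged for all $s$, Learner's loss differs from the original only in round $t$, where it picks up the extra term $c \cdot w_{1,t} + c \cdot w_{2,t} = c$ (using $w_{1,t}+w_{2,t}=1$). Simultaneously, each expert $j \in \{1,2\}$ incurs $c$ additional cumulative loss from the modification in round $t$. Thus both $\hat L_T$ and each $L_{T,j}$ grow by exactly $c$, so $\hat L_T - \min_{j} L_{T,j}$ is unchanged.

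There is no genuine obstacle: the only point to be careful about is propagating the common shift cleanly into all future cumulative losses and verifying that it does not propagate into any $\eta_s$ (it does not, since $\eta_s = \sqrt{1/s}$ depends only on $s$). The combination of weight-invariance with the telescoping of the added constants across Learner and the best expert yields the claim.
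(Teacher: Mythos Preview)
Your proposal is correct and matches the paper's own argument essentially line for line: the paper observes that Decreasing Hedge's weights depend only on the difference $L_{s-1,1}-L_{s-1,2}$, that adding a constant to both experts' losses in round $t$ leaves all such differences intact, and that Learner and the best expert each pick up the same additive $c$, leaving the regret unchanged. Your writeup is slightly more detailed (e.g., explicitly noting that $\eta_s$ does not depend on the losses), but the approach is identical.
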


	Using Proposition \ref{prp:adding_constant}, we can substitute all (1,1) loss vectors with (0,0) loss vectors. This means that there is no need for (1,1) loss vectors in the best-case sequence. 
	
	\subsection{Dealing with Leader Change} 
	\label{sc:leader_change_section}
	
		In this subsection, we introduce the notion of a leader change and show that there is no leader change in the best-case sequence. Therefore, we only need to consider loss sequences without leader changes.
		
		 Formally, we say expert $j$ is a leader at round $t$ if 
	$L_{j,t} = \min_{i \in \{1,2\}} L_{i,t}$. Moreover, if $j$ is the \emph{only} leader at round $t$, then we define the \emph{strict leader} at round $t$ to be $i^*_t=j$.\footnote{If in round $t$, $L_{1,t}=L_{2,t}$, then the strict leader is not defined in that round.}

	We say that a sequence has a leader change if there exists times $t_1$ and $t_2$ such that in both times the strict leader is defined and $i^*_{t_1}\neq i^*_{t_2}$. Defining $\Delta_t := L_{2,t} - L_{1,t}$, observe that if a sequence has a leader change, then the sign of $\Delta$ should change at least once. In the following, we will show that any loss sequence $\lossv_1, \lossv_2, \ldots, \lossv_T$ can be converted, without increasing the regret, to a sequence $\lossv'_1, \lossv'_2, \ldots, \lossv'_T$ where  $\Delta$ is always non-negative (hence, the resulting sequence has no leader change). Removing all leader changes facilitates the next operation to successfully convert loss sequences.

	We first need to define an operation called switching the loss at round $t$. We define $\bar{i} :=  \begin{cases} 
		2 & i = 1 \\
		1 & i = 2 \\ 
	\end{cases}$.
	\begin{definition}
		The operation of \emph{switching loss sequence $\lossv_1, \lossv_2, \ldots, \lossv_T$
			at round $t$} yields a new loss sequence $\lossv'_1, \lossv'_2, \ldots, \lossv'_T$ where for all $s < t$, we have $\loss'_{i,s}=\loss_{i,s}$ and for all $s \geq t$, we have $\loss'_{i,s}=\loss_{\bar{i},s}$. 
		\label{df:switching_operation}
	\end{definition}

	Obviously switching loss sequence  $\lossv_1, \lossv_2, \ldots, \lossv_T$ at round $1$ only swaps the indices of the experts; therefore, the regret remains the same for the switched loss sequence. Similarly, we will show that for any $t$, where $\Delta_{t-1}=0$, switching loss sequence  $\lossv_1, \lossv_2, \ldots, \lossv_T$ at round $t$ does not change the regret.

	\begin{lemma}
		For a loss sequence $\lossv_1, \lossv_2, \ldots, \lossv_T$, if $\Delta_{t-1}=0$ then switching loss sequence at round $t$ does not change the regret. 
		\label{op:switching_operation_rule}	
	\end{lemma}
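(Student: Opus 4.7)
The plan is to show that the switching operation preserves \emph{both} the algorithm's cumulative loss and the best expert's cumulative loss, from which regret-invariance follows immediately. I split the analysis into three regimes: rounds $s < t$ (unaffected), round $t$ (boundary), and rounds $s > t$ (symmetric behaviour).

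First, for $s < t$ the switched sequence coincides with the original, so the cumulative losses $\bm{L}_s$, the Decreasing Hedge weights $p_{\cdot, s}$, and the incurred losses $\hat{\loss}_s$ are literally identical. The main structural observation, which I would establish next, is that under the assumption $\Delta_{t-1}=0$ (i.e., $L_{1,t-1}=L_{2,t-1}$), the switched cumulative losses satisfy the swap identity
\begin{align*}
 L'_{1,s} = L_{2,s}, \qquad L'_{2,s} = L_{1,s} \qquad \text{for all } s \geq t-1.
\end{align*}
This is a one-line telescoping computation: writing $L'_{i,s} = L_{i,t-1} + \sum_{r=t}^{s} \loss_{\bar i, r}$ and using the equality of $L_{1,t-1}$ and $L_{2,t-1}$ collapses to $L_{\bar i,s}$. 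In particular the difference $\Delta'_s = -\Delta_s$ for all $s \geq t-1$.

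Because Decreasing Hedge's weight at round $r+1$ depends on $\bm{L}_r$ only through the exponentials $\exp(-\eta_{r+1} L_{i,r})$, the swap identity immediately gives $p'_{1,r+1} = p_{2,r+1}$ and $p'_{2,r+1} = p_{1,r+1}$ for every $r \geq t-1$. Hence for each $r \geq t+1$,
\begin{align*}
 \hat{\loss}'_r = p'_{1,r}\loss'_{1,r} + p'_{2,r}\loss'_{2,r} = p_{2,r}\loss_{2,r} + p_{1,r}\loss_{1,r} = \hat{\loss}_r.
\end{align*}
For the boundary round $t$ the weights are $p'_{1,t}=p'_{2,t}=\tfrac12$ (since $L'_{1,t-1}=L_{1,t-1}=L_{2,t-1}=L'_{2,t-1}$), and exchanging the two loss coordinates under uniform weights leaves the inner product unchanged, so $\hat{\loss}'_t = \hat{\loss}_t$ as well. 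Summing over all rounds, the algorithm's total loss is preserved.

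Finally, the best expert in hindsight has cumulative loss $\min(L'_{1,T}, L'_{2,T}) = \min(L_{2,T}, L_{1,T})$ by the swap identity, which equals the best expert's cumulative loss in the original sequence. Both the algorithm's total loss and the comparator are invariant, so the regret is unchanged. The only real subtlety is bookkeeping around the index $t$: the weights at round $t$ depend on $\bm{L}_{t-1}$ (where the sequences still agree), while the weights at round $t+1$ onward depend on $\bm{L}_s$ for $s \geq t$ (where the swap identity kicks in). Handling this boundary carefully is the main source of potential error, but the assumption $\Delta_{t-1}=0$ is precisely what makes the two regimes meet cleanly.
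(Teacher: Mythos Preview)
Your proposal is correct and follows essentially the same approach as the paper: establish the swap identity $L'_{i,s}=L_{\bar i,s}$ for $s\geq t-1$ from $\Delta_{t-1}=0$, deduce that the Decreasing Hedge weights swap, conclude $\hat\loss'_s=\hat\loss_s$ for every round, and note that the best expert's cumulative loss is preserved. The only cosmetic difference is that you single out round $t$ and use the uniform weights there, while the paper treats all $s\geq t$ in one algebraic line via $\Delta'_{s-1}=-\Delta_{s-1}$; both are valid and amount to the same computation.
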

	
	The proof is in Appendix~\ref{prf:switching_operation_rule}.

	We also need to formally define the notion of a leader change at round $t$. 
	\begin{definition}
		We say a loss sequence has a leader change at round $t$ if there exists $k<t$ such that for all $s$ satisfying $t-k<s<t$, it holds that $\Delta_{s}=0 $ and $\Delta_{t}\cdot \Delta_{t-k} < 0 $.
	\end{definition}
	\begin{definition}
		Consider a sequence $\lossv_1, \lossv_2, \ldots, \lossv_T$ where $t$ is the smallest value such that $\Delta_t \neq 0$. We define the first strict leader in this sequence to be expert $1$ if $\Delta_t > 0$ and expert $2$ if $\Delta_t < 0$. 
	\end{definition}
	We are now ready to show that any sequence with leader changes can be converted, without increasing the regret, to a sequence with no leader change. We only consider loss sequences in which the first strict leader is expert $1$ because if the first strict leader for a sequence $\lossv_1, \lossv_2, \ldots, \lossv_T$ is expert $2$, then we can simply swap the experts' indices.
	
	Now, if the first strict leader is expert $1$, then $\Delta$ first becomes positive before it gets any chance to become negative. Our goal now is to modify the loss sequence without increasing the regret so that $\Delta$ stays non-negative for all rounds. The number of times the sign of $\Delta$ changes corresponds to the number of leader changes.
	\begin{observation}
		The number of leader changes in a sequence is the number of times the sign of $\Delta_t$ changes from negative to positive or vice versa.  
	\end{observation}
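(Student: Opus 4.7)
The plan is to unpack both sides of the claimed identity and construct a bijection. Let me enumerate the nonzero values of $\Delta$: let $t_1 < t_2 < \cdots < t_m$ be exactly the rounds at which $\Delta_t \neq 0$ (so between consecutive $t_i$ and $t_{i+1}$, every intermediate $\Delta_s$ equals $0$). Let $N$ be the number of indices $i \in [m-1]$ for which $\mathrm{sign}(\Delta_{t_i}) \neq \mathrm{sign}(\Delta_{t_{i+1}})$; this is precisely "the number of times the sign of $\Delta_t$ changes from negative to positive or vice versa" on the right-hand side. I will show the number of leader changes equals $N$.

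For the forward direction, I would take any leader change at some round $t$ with witness $k$. By the definition, $\Delta_t \neq 0$, so $t = t_{i+1}$ for some $i$. Also $\Delta_{t-k} \neq 0$ (since $\Delta_t \cdot \Delta_{t-k} < 0$), and because all $\Delta_s$ strictly between $t-k$ and $t$ are zero, no element of the subsequence $t_1,\ldots,t_m$ lies strictly between $t-k$ and $t$; hence $t - k = t_i$. Then $\Delta_{t_{i+1}} \cdot \Delta_{t_i} < 0$, meaning index $i$ contributes to the $N$ count. Conversely, given any $i \in [m-1]$ with $\Delta_{t_i} \cdot \Delta_{t_{i+1}} < 0$, setting $t := t_{i+1}$ and $k := t_{i+1} - t_i$ satisfies the leader-change definition verbatim, since for all $s$ with $t-k < s < t$ we have $\Delta_s = 0$ (as $s$ is not one of the $t_j$'s) and $\Delta_t \cdot \Delta_{t-k} < 0$ by assumption.

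The two maps are inverses of one another by construction, which identifies the set of leader-change rounds with the set of sign-flip indices in the nonzero subsequence, proving the counts are equal. No step here is a serious obstacle; the only mild subtlety is verifying that the witness $t-k$ in a leader change must in fact be the immediately preceding nonzero index $t_i$ rather than some earlier $t_{i'}$ with $i' < i$, which is forced by the "$\Delta_s = 0$ for all $t-k < s < t$" clause in the definition.
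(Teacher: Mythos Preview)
Your argument is correct. The paper states this as an \emph{Observation} without proof, treating it as an immediate consequence of the definitions, so there is no ``paper's own proof'' to compare against; your bijection between leader-change rounds and sign-flip indices in the nonzero subsequence $(t_1,\ldots,t_m)$ is a clean way to make the statement fully precise, and the one subtlety you flag---that the witness $t-k$ is forced to be the \emph{immediately} preceding nonzero index---is exactly what guarantees the two maps are mutual inverses.
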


	\begin{lemma}
		If a loss sequence $\lossv_1, \ldots, \lossv_T$ has at least one leader change, and the first leader change happens at round $t$, then switching the loss sequence at round $t$ would remove the first leader change without changing the regret. Moreover, this switch operation ensures that the first leader change (if any) in resulting loss sequence $\lossv'_1, \ldots, \lossv'_T$ happens at time $s>t$.  
		\label{eq:decrease_leader_change_by_1}
	\end{lemma}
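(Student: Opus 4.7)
The plan is to chain together three observations. First, infer from the definition of the first leader change that $\Delta_{t-1}=0$; then invoke Lemma~\ref{op:switching_operation_rule} to obtain regret invariance essentially for free; finally, track how $\Delta$ transforms under the switch and verify that no leader change survives at any round $\leq t$ in the new sequence. Throughout we may assume, as in the preceding discussion, that expert $1$ is the first strict leader, so $\Delta$ first becomes positive.

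For the first step, I would use that $\Delta_s - \Delta_{s-1} = \loss_{2,s}-\loss_{1,s} \in \{-1,0,1\}$ in each round, so $\Delta$ cannot ``jump over'' zero. Combined with the assumption that expert $1$ is the first strict leader (so $\Delta$ is non-negative before any leader change), the leader-change definition --- requiring $\Delta_{t-k} > 0$, intermediate $\Delta$'s equal to $0$, and $\Delta_t < 0$ --- forces $k \geq 2$ and in particular $\Delta_{t-1} = 0$. With that in hand, Lemma~\ref{op:switching_operation_rule} is directly applicable and yields that switching at round $t$ preserves the regret, which is the first half of the claim.

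For the structural half, let $\lossv'_1,\ldots,\lossv'_T$ denote the switched sequence and $\Delta'_s := L'_{2,s}-L'_{1,s}$. A direct computation from Definition~\ref{df:switching_operation} gives $\Delta'_s = \Delta_s$ for $s < t$, while for $s \geq t$ one obtains $\Delta'_s = 2\Delta_{t-1} - \Delta_s = -\Delta_s$, where the crucial simplification uses $\Delta_{t-1}=0$. This sign-flip from round $t$ onward is the heart of the argument. For $r < t$ the new and old sequences coincide in $\Delta$, so the new sequence inherits the absence of any leader change strictly before round $t$. At round $t$ itself, the flip gives $\Delta'_t = -\Delta_t > 0$, which matches the non-negative sign of every $\Delta'_r$ for $r < t$, so no leader change occurs at round $t$ either. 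Consequently, the first leader change in $\lossv'_1,\ldots,\lossv'_T$, if any, must occur at some $s > t$. The main obstacle --- and really the only subtlety --- is ensuring that the switch does not inadvertently manufacture a leader change at round $t$ itself; the identity $\Delta'_t = -\Delta_t$ combined with $\Delta_t < 0$ is exactly what closes this gap.
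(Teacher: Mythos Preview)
Your proposal is correct and follows essentially the same approach as the paper's proof: establish $\Delta_{t-1}=0$, invoke Lemma~\ref{op:switching_operation_rule} for regret invariance, and use the sign-flip $\Delta'_s=-\Delta_s$ for $s\geq t$ to rule out any leader change at rounds $\leq t$. Your treatment is in fact slightly more explicit---you spell out the step-size-one argument for why $\Delta_{t-1}=0$ and derive the general formula $\Delta'_s=2\Delta_{t-1}-\Delta_s$ before specializing---whereas the paper simply asserts $\Delta_{t-1}=0$ and the sign flip directly.
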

	Figure \ref{fig:decreasing_leader_change} illustrates how applying Lemma \ref{eq:decrease_leader_change_by_1} removes the first leader change. In the following, we will give a formal proof.
	\begin{proof}
		We assume that the first strict leader is expert $1$, therefore $\Delta_s \geq 0$ for $s < t$. Now, in round $t$ where the first leader change happens, we know that $\Delta_t< 0$. Moreover, $\Delta_{t-1} = 0$. Now by Lemma \ref{op:switching_operation_rule} we can switch the loss sequence at round $t$ without increasing the regret. In the resulting loss sequence, for all $s<t$, $\Delta'_s = \Delta_s$ and for all $s \geq t$, $\Delta'_s = -\Delta_s$. This ensures that for all $s \leq t$, $\Delta'_t \geq 0$. Therefore, in the resulting loss sequence, no leader change could happen at any time $s\leq t$. Therefore, the first leader change (if any) in the resulting loss sequence happens at some time $s>t$. 
	\end{proof}

	\begin{figure}[h!]
		\centering
		\begin{subfigure}[b]{0.45\linewidth}
			\includegraphics[width=\linewidth]{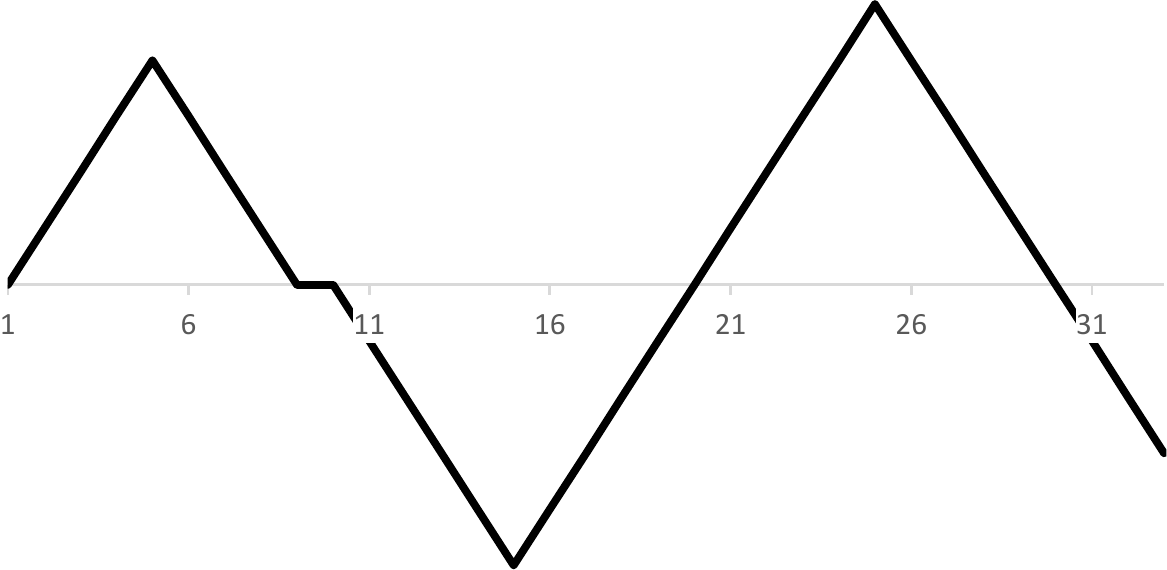}
			\caption{$\Delta_t$.}
		\end{subfigure}
		\hspace{0.05\linewidth}
		\begin{subfigure}[b]{0.45\linewidth}
			\includegraphics[width=\linewidth]{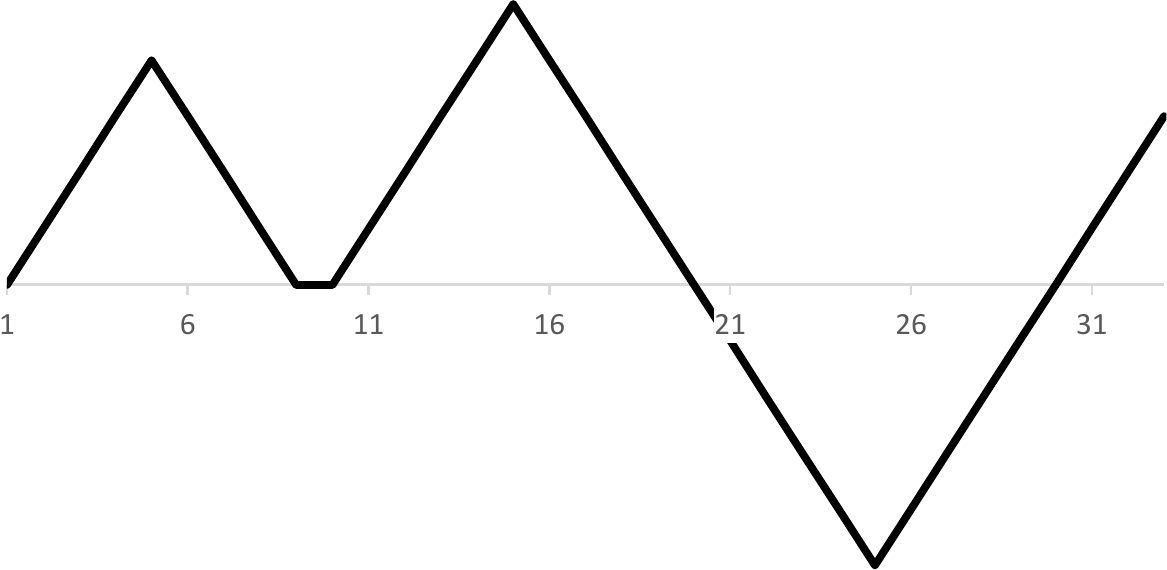}
			\caption{$\Delta'_t$  after switching at round $11$}
		\end{subfigure}
		\caption{The plots show $\Delta_t$ and $\Delta'_t$ for a loss sequence $\lossv_1, \ldots, \lossv_T$ and $\lossv'_1, \ldots, \lossv'_T$. The first leader change in $\lossv_1, \ldots, \lossv_T$ happens at round 11. Switching at round 11 decreases the number of leader changes. The first leader change in the resutling sequence $\lossv'_1, \ldots, \lossv'_T$ happens at round $21>11$.}
		\label{fig:decreasing_leader_change}
	\end{figure}
	
	\begin{lemma}
		Any loss sequence with a leader change can be converted to a loss sequence with no leader change without changing the regret.
		\label{lm:no_leader_change_sequence}
	\end{lemma}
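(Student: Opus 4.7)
The plan is to apply Lemma \ref{eq:decrease_leader_change_by_1} iteratively, using the fact that each switching operation strictly pushes the first leader change to a later round. Since the number of rounds is finite, this process terminates in a leader-change-free sequence, and along the way each switch preserves the regret exactly.

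More concretely, first I would handle a preliminary normalization: Lemma \ref{eq:decrease_leader_change_by_1} is stated under the convention that the first strict leader is expert $1$. If instead the first strict leader in the input sequence $\lossv_1, \ldots, \lossv_T$ is expert $2$, I would apply the switching operation at round $1$. By Lemma \ref{op:switching_operation_rule} (with $\Delta_0 = 0$), this does not change the regret, and it simply relabels the experts so that expert $1$ becomes the first strict leader.

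Next, I would run the following iteration. Let $\lossv^{(0)}_1, \ldots, \lossv^{(0)}_T$ denote the (possibly relabeled) input sequence. If it has no leader change, we are done. Otherwise, let $t_1$ be the round where its first leader change occurs, and apply the switching operation at round $t_1$ to obtain $\lossv^{(1)}_1, \ldots, \lossv^{(1)}_T$. By Lemma \ref{eq:decrease_leader_change_by_1}, this step preserves the regret, removes the leader change at round $t_1$, and guarantees that the first leader change of $\lossv^{(1)}$ (if any) occurs at some round $t_2 > t_1$. Iterating produces a sequence of loss sequences $\lossv^{(k)}$ and a strictly increasing sequence of times $t_1 < t_2 < \cdots$ marking the first leader change of each. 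Because $t_k \in \{1, \ldots, T\}$, the iteration must terminate after at most $T$ steps, producing a sequence with no leader change and with the same regret as the original.

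The main subtlety, which is what makes Lemma \ref{eq:decrease_leader_change_by_1} work within this induction, is that switching at the first leader change does not merely \emph{shift} a leader change from one place to another — it actually eliminates it, because after switching, $\Delta'_s \ge 0$ holds for all $s \le t_k$, so no later leader change can occur at or before $t_k$. This monotonic advancement of the ``frontier'' of leader changes is what guarantees termination; without it, one could only argue a non-increase in the count of leader changes, which in principle could stall. Given Lemma \ref{eq:decrease_leader_change_by_1}, however, the argument is short, and the only ingredient I need to verify carefully is that the preliminary index-swap at round $1$ is itself regret-preserving, which is immediate from Lemma \ref{op:switching_operation_rule}.
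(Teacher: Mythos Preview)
Your proposal is correct and follows essentially the same approach as the paper: iteratively apply Lemma~\ref{eq:decrease_leader_change_by_1} to push the first leader change strictly later, and use finiteness of $T$ to guarantee termination with regret preserved throughout. The only difference is that you make the expert-relabeling normalization explicit, whereas the paper handles that in the surrounding discussion rather than in the proof itself.
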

	\begin{proof}
		If a sequence has at least one leader change, we can remove the first leader change at round $t$ using Lemma \ref{eq:decrease_leader_change_by_1}. By applying this lemma, the first leader change (if any) in the resulting loss sequence happens at round $s>t$. Because the number of rounds $T$ is bounded, we can use this operation finitely many times to reach a sequence without any leader change.
	\end{proof}

	\subsection{Swapping two consecutive losses}
	
	In this section, we will consider the operation of swapping two consecutive loss vectors. When we swap a loss sequence $\lossv_1, \ldots,\lossv_{t-1}, \lossv_t, \lossv_{t+1}, \lossv_{t+2}, \ldots, \lossv_T$ at rounds $(t, t+1)$, the resulting sequence becomes $\lossv_1, \ldots,\lossv_{t-1}, \lossv_{t+1}, \lossv_{t}, \lossv_{t+2} \ldots, \lossv_T$. Because the behavior of Decreasing Hedge at round $s$ only depends on $\bm{L}_{s-1}$, swapping two consecutive loss vectors at rounds $(t,t+1)$ does not change $\bm{L}_{s-1}$ for $s \in [T] \setminus \{t, t+1\}$ and therefore does not affect the behavior of the algorithm at any rounds other than rounds $t$ and $t+1$. Therefore, in order to see the effect of swapping two consecutive loss vectors at rounds $(t,t+1)$, it is enough to compare $\hedgeloss_{t} + \hedgeloss_{t+1}$ with $\hedgeloss'_{t} + \hat{\loss}'_{t+1}$ where $\hedgeloss'_{t}$ is defined to be the loss incurred by Decreasing Hedge at round $t$ in the modified sequence.

	In Section \ref{sc:converting_1_to_0_section}, we had shown that we can replace all $\binom{1}{1}$ loss vectors with $\binom{0}{0}$ without changing the regret. Thus, we only consider all possible sequences with loss vectors $\binom{1}{0}$, $\binom{0}{1}$, $\binom{0}{0}$ and characterize when swapping two consecutive loss vectors of the above form does not increase the regret. 
	
	Moreover, by Section \ref{sc:leader_change_section}, we only need to consider loss sequences $\lossv_1, \ldots, \lossv_T$ that do not have a leader change prior to the swap. Moreover, without loss of generality, we can assume the expert 1 is always the leader in the loss sequence.  
	
	We will now show that in a sequence where expert 1 is always the leader, any swap that pushes the $\binom{1}{0}$ vector to the later rounds or any swap that pushes the $\binom{0}{1}$ to the earlier rounds does not increase the regret.

	\begin{lemma}[Swapping rules]
		In a loss sequence $\lossv_1, \ldots, \lossv_T$, where $\Delta_{t-1} \geq 0$, swapping at round $(t,t+1)$ does not increase the regret in any of the following cases: 
		\begin{enumerate}[label=(\alph*)]
			\item $\loss_{t}, \loss_{t+1} = \binom{1}{0} \binom{0}{0};$ 
			\item $\loss_{t}, \loss_{t+1} = \binom{0}{0} \binom{0}{1};$ 
			\item $\loss_{t}, \loss_{t+1} = \binom{1}{0} \binom{0}{1}.$
		\end{enumerate}
		\label{lm:swapping_rules_lemma_from_Nishant}
	\end{lemma}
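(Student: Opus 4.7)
The plan is to exploit the local nature of the swap: since swapping $\lossv_t$ and $\lossv_{t+1}$ changes neither $\bm{L}_{s-1}$ for any $s \notin \{t, t+1\}$ nor the cumulative loss of either expert at horizon $T$, the regret difference will equal $(\hat{\ell}'_t + \hat{\ell}'_{t+1}) - (\hat{\ell}_t + \hat{\ell}_{t+1})$, and it will suffice to show this two-round sum is non-positive. Writing $\Delta := \Delta_{t-1}$ and $\sigma(x) := 1/(1+e^x)$, the probability Decreasing Hedge places on expert $2$ at round $s$ is $\sigma(\eta_s \Delta_{s-1})$, while the probability on expert $1$ is $1 - \sigma(\eta_s \Delta_{s-1}) = \sigma(-\eta_s \Delta_{s-1})$.

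For case (a), I would compute that the original and swapped two-round sums evaluate to $\sigma(-\eta_t \Delta)$ and $\sigma(-\eta_{t+1} \Delta)$ respectively; since $\sigma$ is strictly decreasing and $\Delta \geq 0$, the relation $\eta_{t+1} \leq \eta_t$ immediately gives $\sigma(-\eta_{t+1} \Delta) \leq \sigma(-\eta_t \Delta)$ (with equality iff $\Delta = 0$), so the swap does not increase the regret. Case (b) is symmetric: the original sum will be $\sigma(\eta_{t+1} \Delta)$ and the swapped sum $\sigma(\eta_t \Delta) \leq \sigma(\eta_{t+1} \Delta)$.

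The hard part will be case (c), where the original and swapped sums come out to $\sigma(-\eta_t \Delta) + \sigma(\eta_{t+1}(\Delta-1))$ and $\sigma(\eta_t \Delta) + \sigma(-\eta_{t+1}(\Delta+1))$. Using $\sigma(-x) = 1 - \sigma(x)$, the ``swapped $\leq$ original'' inequality reduces to
\[
   2\sigma(\eta_t \Delta) \;\leq\; \sigma(\eta_{t+1}(\Delta-1)) + \sigma(\eta_{t+1}(\Delta+1)).
\]
For $\Delta = 0$ both sides equal $1$. For $\Delta \geq 1$, the plan is to decouple the two reasons the right-hand side dominates. First, a direct calculation shows $\sigma''(x) = e^x(e^x - 1)/(1+e^x)^3 \geq 0$ for $x \geq 0$, so $\sigma$ is convex on $[0,\infty)$ and Jensen's inequality at the midpoint of $[\eta_{t+1}(\Delta-1), \eta_{t+1}(\Delta+1)] \subset [0,\infty)$ gives $\sigma(\eta_{t+1}(\Delta-1)) + \sigma(\eta_{t+1}(\Delta+1)) \geq 2\sigma(\eta_{t+1}\Delta)$. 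Second, since $\sigma$ is decreasing and $\eta_{t+1}\Delta \leq \eta_t\Delta$, one has $2\sigma(\eta_{t+1}\Delta) \geq 2\sigma(\eta_t\Delta)$, chaining the two inequalities to close the argument. Identifying and combining these two ingredients---positive-half-line convexity of the sigmoid and monotonicity in the learning rate---is the only nontrivial step; cases (a) and (b) will only require monotonicity of $\sigma$.
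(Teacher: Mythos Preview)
Your proposal is correct and follows essentially the same approach as the paper's proof: both reduce to the two-round local difference, dispatch cases (a) and (b) via monotonicity of the sigmoid in the learning rate, and handle case (c) by combining Jensen's inequality (concavity of $f(x)=1/(1+e^{-x})$ on $[0,\infty)$, equivalently convexity of your $\sigma=1-f$) with the same monotonicity step. Your explicit treatment of the $\Delta_{t-1}=0$ subcase in (c) is slightly cleaner than the paper, which instead invokes the ambient ``no leader change'' assumption to force $\Delta_{t-1}\geq 1$.
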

	
	The proof is in Appendix~\ref{prf:swapping_rules_lemma_from_Nishant}.

	Observe that by Lemma \ref{lm:swapping_rules_lemma_from_Nishant} we can always move $\binom{0}{1}$ to the earlier rounds and $\binom{1}{0}$ to the later rounds. By repeatedly applying this swapping rule, the resulting loss sequence will be of the form $\lossv_1, \ldots, \lossv_T = {\binom{0}{1}}^a {\binom{0}{0}}^c {\binom{1}{0}}^b$. 
	Note that as we have shown that $\Delta$ is always non-negative, $a\geq b$. 

	\subsection{Regret is minimized when both experts have the same cumulative loss}
	So far we have shown that any loss sequence can be converted to a loss sequence with form $\left({\binom{0}{1}}^{a} {\binom{0}{0}}^c {\binom{1}{0}}^{b}\right)$ where $a\geq b$ while having the same or even less regret for Decreasing Hedge. First we show that when $T$ is even, the number of zeros in the middle should be even.

	\begin{lemma}
		If a sequence has the form $\lossv_1, \ldots, \lossv_T = \left({\binom{0}{1}}^{a-1} \binom{0}{1} {\binom{0}{0}}^{2k-1} {\binom{1}{0}}^{b}\right)$ where $a\geq b$ and $T$ is even, then converting it to a loss sequence  $\lossv'_1, \ldots, \lossv'_T=\left({\binom{0}{1}}^{a-1} \binom{0}{0} {\binom{0}{0}}^{2k-1} {\binom{1}{0}}^{b}\right)$ decreases the regret.
		\label{lm:c_even}
	\end{lemma}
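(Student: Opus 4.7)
The plan is to compare the hedge loss and the best-expert loss round by round between the original and modified sequences, showing that the best-expert loss is unchanged while the hedge loss strictly decreases. A crucial preliminary observation is a parity argument: since $T = a + (2k-1) + b$ is even and $2k-1$ is odd, $a+b$ must be odd, which combined with $a \geq b$ forces $a \geq b+1$. This strict inequality is what makes the lemma work, because expert $2$'s cumulative loss is $a$ in the original and $a-1 \geq b$ in the modified, so in both sequences the best-in-hindsight loss equals $b$ (attained by expert $1$). Hence it suffices to show that the cumulative hedge loss strictly decreases.

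Next, I track the gap $\Delta_{t-1} = L_{2,t-1} - L_{1,t-1}$ along each sequence. Through round $a-1$ the two sequences agree, so $\Delta_{t-1} = \Delta'_{t-1}$. At round $a$ the original contributes $\binom{0}{1}$, pushing $\Delta$ up by one, while the modified contributes $\binom{0}{0}$ and leaves $\Delta'$ unchanged; on rounds $a+1,\ldots,T$ the two sequences coincide, so $\Delta$ and $\Delta'$ evolve by identical increments thereafter. Consequently $\Delta_{t-1} = \Delta'_{t-1} + 1$ for every $t > a$, and $\Delta' \geq 0$ throughout (using $a-1 \geq b$), so the modified sequence still has no leader change.

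Finally, I bound the hedge-loss contribution round by round. On rounds $1,\ldots,a-1$ and on the $2k-1$ middle rounds $a+1,\ldots,a+2k-1$ both versions incur the same hedge loss (and in the middle block, zero). At round $a$ the algorithm incurs $p_{2,a} = \frac{1}{1 + e^{\eta_a (a-1)}}$ in the original and $0$ in the modified, a strict decrease of $p_{2,a} > 0$. On the final $b$ rounds, each equal to $\binom{1}{0}$, the hedge loss equals $p_{1,t} = \frac{1}{1 + e^{-\eta_t \Delta_{t-1}}}$, which is strictly increasing in $\Delta_{t-1}$; since $\Delta_{t-1} > \Delta'_{t-1}$, the modified hedge loss is strictly smaller at each such round. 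Summing, the total hedge loss strictly decreases while the best-expert loss is unchanged, so the regret strictly decreases. The main subtlety is precisely the parity step: without assuming $T$ even, the case $a = b$ could occur, in which case expert $2$ would become the sole best expert in the modified sequence (with loss $a-1 < b$), and the hedge-loss decrease would have to dominate an extra unit of best-expert loss, which is not uniformly guaranteed.
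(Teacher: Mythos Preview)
Your proof is correct and follows essentially the same approach as the paper: the parity argument forcing $a\geq b+1$, the unchanged best-expert loss, the strict drop at round $a$, the zero contribution on the middle block, and the monotonicity of the sigmoid in $\Delta_{t-1}$ on the final $b$ rounds are exactly the steps the paper uses. Your added remark explaining why the parity hypothesis is essential (to rule out $a=b$) is a nice clarification not present in the paper.
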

	
	The proof is in Appendix~\ref{prf:c_even}.

	Now, we only consider loss sequences of form  $\left({\binom{0}{1}}^{a} {\binom{0}{0}}^c {\binom{1}{0}}^{b}\right)$ where $c$ is even. As $a+b+c=T$ and $T$ is even, $a+b=2K$ is even. 
	
	Next, we will show in the following lemma that any loss sequence $\left({\binom{0}{1}}^{a} {\binom{0}{0}}^c {\binom{1}{0}}^{b}\right)$ where $a+b=2K$ and $a\geq b$ can be converted to a loss sequence $\left({\binom{0}{1}}^{K} {\binom{0}{0}}^c {\binom{1}{0}}^{K}\right)$ while having the same or even less regret. 
	\begin{lemma}
		Among all loss sequences of form $\left({\binom{0}{1}}^{a} {\binom{0}{0}}^c {\binom{1}{0}}^{b}\right)$ where $a+b=2K$ and $a\geq b$, loss sequence $\left({\binom{0}{1}}^{K} {\binom{0}{0}}^c {\binom{1}{0}}^{K}\right)$ has the least regret for Decreasing Hedge.
		\label{lm:equalize_both_experts_cumulative_loss}
	\end{lemma}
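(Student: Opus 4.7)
The plan is to prove the lemma by an ``equalization'' induction that brings $(a,b)$ to $(K,K)$ one step at a time. Concretely, I would show that whenever $a > b$ (so that $a \geq b+2$ by the parity $a+b=2K$), replacing the sequence $\binom{0}{1}^{a} \binom{0}{0}^{c} \binom{1}{0}^{b}$ by its neighbor $\binom{0}{1}^{a-1} \binom{0}{0}^{c} \binom{1}{0}^{b+1}$ does not increase the Decreasing Hedge regret. Iterating this replacement $\tfrac{a-b}{2}$ times lands on the canonical-form representative $\binom{0}{1}^{K} \binom{0}{0}^{c} \binom{1}{0}^{K}$.

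The two sequences differ only at positions $a$ (where $\binom{0}{1}$ becomes $\binom{0}{0}$) and $a+c$ (where $\binom{0}{0}$ becomes $\binom{1}{0}$), so it suffices to track the per-round algorithmic loss difference. Rounds $1$ through $a-1$ are identical, so the weights $p_{\cdot,t}$ agree, and in particular $\Delta_{a-1}$ is the same in both sequences; thus at round $a$ the two algorithms play identically, but the original pays $p_{2,a}$ while the new pays $0$. Rounds $a+1$ through $a+c-1$ are $\binom{0}{0}$ in both and contribute zero loss. At round $a+c$ the original pays $0$ while the new pays $p_{1,a+c}^{\mathrm{new}} \in [0,1]$. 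For each round $t > a+c$, both sequences present a $\binom{1}{0}$ loss, but a short computation gives $\Delta_{t-1}^{\mathrm{new}} = \Delta_{t-1}^{\mathrm{orig}} - 2$, so by monotonicity of the softmax weight $p_{1,t}$ in $\Delta_{t-1}$, the new sequence plays a smaller weight on expert~$1$ and incurs smaller loss on each such round.

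Collecting these per-round contributions yields
\begin{align*}
\hat{L}_T^{\mathrm{new}} - \hat{L}_T^{\mathrm{orig}} \;=\; -\,p_{2,a} \;+\; p_{1,a+c}^{\mathrm{new}} \;+\; \sum_{t>a+c} \bigl(p_{1,t}^{\mathrm{new}} - p_{1,t}^{\mathrm{orig}}\bigr) \;\leq\; p_{1,a+c}^{\mathrm{new}} \;\leq\; 1 ,
\end{align*}
using that the summed terms are nonpositive and $p_{2,a}\geq 0$. Meanwhile the best action in hindsight remains expert~$1$ (since $a-1 \geq b+1$), but its cumulative loss rises from $b$ to $b+1$, shifting the comparator by $+1$. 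Combining, $\regret^{\mathrm{new}} - \regret^{\mathrm{orig}} = (\hat{L}_T^{\mathrm{new}} - \hat{L}_T^{\mathrm{orig}}) - 1 \leq 0$, so the single-step replacement is regret-non-increasing, and iterating proves the lemma.

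The main obstacle will be the book-keeping: verifying the $\Delta$-shift on rounds $t > a+c$, and confirming that the replacement preserves the ``expert~$1$ is always the leader'' structure so that the monotonicity step applies and no leader-change cases need to be re-examined. Both are easy given $a \geq b+2$: $\Delta^{\mathrm{new}}$ peaks at $a-1$ and ends at $(a-1)-(b+1) = a-b-2 \geq 0$, so it stays non-negative throughout. The proof is ultimately driven by the crude bound $p_{1,a+c}^{\mathrm{new}} \leq 1$, which the comparator's $+1$ shift exactly absorbs, and this is where the argument uses nothing more delicate than the sigmoid form of the two-expert Hedge weights.
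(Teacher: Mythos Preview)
Your proof is correct and follows the same one-step induction as the paper (reduce $(a,b)$ to $(a-1,b+1)$ and iterate), driven by the same ingredients: the $\Delta$-shift by $-2$ on the tail rounds, sigmoid monotonicity of $p_{1,t}$ in $\Delta_{t-1}$, and the comparator's $+1$ exactly absorbing the crude bound on the algorithm's loss increase. The only difference is cosmetic: the paper first flips position $a$ from $\binom{0}{1}$ to $\binom{1}{0}$, obtaining the intermediate sequence $\binom{0}{1}^{a-1}\binom{1}{0}\binom{0}{0}^c\binom{1}{0}^b$, and then invokes the swapping lemma (Lemma~\ref{lm:swapping_rules_lemma_from_Nishant}) to push that $\binom{1}{0}$ past the $\binom{0}{0}$ block, whereas you compare the two canonical-form sequences directly; your route is slightly cleaner since it removes the dependence on the swapping lemma for this step.
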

The proof is in Appendix~\ref{prf:equalize_both_experts_cumulative_loss}.

	\subsection{The optimum number of $(0,0)$ loss vectors in the middle}

	So far, we know that the best-case sequence has the form $\left({\binom{0}{1}}^{K} {\binom{0}{0}}^c {\binom{1}{0}}^{K}\right)$ where $c$ is even. Is the existence of $\binom{0}{0}^c$ in the middle necessary to have the loss sequence with the least possible regret?

	In order to examine this, we consider a simple modification to the loss sequence. Consider a loss sequence in which one $(0,0)$ is replaced with $(0,1)$ and another $(0,0)$ is replaced with $(1,0)$. Then by the swapping lemma (Lemma \ref{lm:swapping_rules_lemma_from_Nishant}), we can move the $(1,0)$ to the later rounds and the $(0,1)$ to the earlier rounds without increasing the regret. If we denote the original sequence as $\lossv_1,\ldots,\lossv_T = {\binom{0}{1}}^a {\binom{0}{0}}^c {\binom{1}{0}}^b$, the modified loss sequence becomes  $\lossv'_1,\ldots,\lossv'_T={\binom{0}{1}}^{a+1} {\binom{0}{0}}^{c-2} {\binom{1}{0}}^{b+1}$.  
	\begin{lemma}
		Consider a loss sequence $\lossv_1,\ldots,\lossv_T = {\binom{0}{1}}^a {\binom{0}{0}}^c {\binom{1}{0}}^b$, where $c$ is even and $a \geq 4$. Then modifying loss sequence to $\lossv'_1,\ldots,\lossv'_T={\binom{0}{1}}^{a+1} {\binom{0}{0}}^{c-2} {\binom{1}{0}}^{b+1}$ does not increase the regret if $c \geq 4$ and does increase the regret if $c \leq 3$.
		\label{lm:remove_zeros}
	\end{lemma}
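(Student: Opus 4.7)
My plan is to reduce the regret difference to a clean one-round-versus-one-round comparison. First, since $a \geq b$ in both sequences (the counts of $\binom{0}{1}$ and $\binom{1}{0}$ only get closer together, but expert 1 remains the leader since $a+1\geq b+1$), the best expert in hindsight is expert $1$, with cumulative loss $b$ in $S := \lossv_{1:T}$ and $b+1$ in $S' := \lossv'_{1:T}$. Hence
\[
\regret_T(S') - \regret_T(S) = \bigl(\hat{L}'_T - (b+1)\bigr) - \bigl(\hat{L}_T - b\bigr) = \hat{L}'_T - \hat{L}_T - 1,
\]
so the lemma is equivalent to showing that $\hat{L}'_T - \hat{L}_T \leq 1$ exactly when $c \geq 4$.

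Next, I would track $\Delta_{t-1}$ for both sequences and observe that the only rounds where the loss vectors of $S$ and $S'$ differ are round $a+1$ (where $S$ plays $\binom{0}{0}$ but $S'$ plays $\binom{0}{1}$) and round $a+c$ (where $S$ plays $\binom{0}{0}$ but $S'$ plays $\binom{1}{0}$). A direct computation shows that at every other round, either the loss vector is $\binom{0}{0}$ in both (contributing $0$ in each), or the vectors agree \emph{and} the values of $\Delta_{t-1}$ agree — the key reconciliation being that in the final $\binom{1}{0}$ block, both sequences satisfy $\Delta_{t-1} = 2a+c-t+1$. So the entire loss difference collapses to the two differing rounds.

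On those two rounds, using $p_{2,t} = \sigma(-\eta_t \Delta_{t-1})$ and $p_{1,t} = \sigma(\eta_t \Delta_{t-1})$, I would plug in $\Delta'_{a} = a$ (still inside the extended $\binom{0}{1}$ block of $S'$) and $\Delta'_{a+c-1} = a+1$ (inside the shortened $\binom{0}{0}$ block of $S'$, which is nonempty iff $c \geq 3$), obtaining
\[
\hat{L}'_T - \hat{L}_T \;=\; \sigma\bigl(-\eta_{a+1}\, a\bigr) \;+\; \sigma\bigl(\eta_{a+c}(a+1)\bigr).
\]
Using $1 - \sigma(-x) = \sigma(x)$ and monotonicity of $\sigma$, this quantity is $\leq 1$ if and only if $\eta_{a+c}(a+1) \leq \eta_{a+1}\, a$, which, after squaring and using $\eta_t = 1/\sqrt{t}$, becomes the clean polynomial inequality $(a+1)^3 \leq a^2(a+c)$, i.e.
\[
c \;\geq\; 3 + \frac{3}{a} + \frac{1}{a^2}.
\]

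The main (and only) obstacle is verifying that the assumption $a \geq 4$ is precisely what makes the threshold on $c$ fall into the right place. For $a \geq 4$ the right-hand side is at most $3 + 3/4 + 1/16 = 61/16 < 4$, so any even $c \geq 4$ satisfies it and the modification does not increase the regret. On the other hand, for $c \leq 3$ (which, under the parity constraint, means $c=2$) the right-hand side exceeds $c$, so $(a+1)^3 > a^2(a+c)$, giving $\hat L'_T - \hat L_T > 1$ and a strict increase in regret. This matches both halves of the stated claim, and incidentally shows that $a \geq 4$ is tight: at $a=3$, $c=4$ one has $(a+1)^3 = 64 > 63 = a^2(a+c)$, and the argument would break down.
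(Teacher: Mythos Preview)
Your proposal is correct and follows essentially the same approach as the paper: both isolate the two differing rounds $a+1$ and $a+c$, compute the regret difference as $\sigma(-\eta_{a+1}a)+\sigma(\eta_{a+c}(a+1))-1$, and reduce the sign question to the inequality $\eta_{a+c}(a+1)\leq \eta_{a+1}a$, equivalently $c\geq 3+\tfrac{3}{a}+\tfrac{1}{a^2}$. Your extra observation that $a\geq 4$ is tight (via the $a=3$, $c=4$ check) is a nice addition not present in the paper; the only cosmetic slip is the parenthetical ``nonempty iff $c\geq 3$'' for the shortened zero block---when $c=2$ that block is empty, but $\Delta'_{a+c-1}=\Delta'_{a+1}=a+1$ still holds, so the computation is unaffected.
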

The proof is in Appendix~\ref{prf:remove_zeros}.

	Now, let us consider a loss sequence $\lossv_1, \ldots, \lossv_T$ of length $T = 2K \geq 16$ of the form ${\binom{0}{1}}^a {\binom{0}{0}}^c {\binom{1}{0}}^b$. If $a > b$, then we can and will use Lemma~\ref{lm:equalize_both_experts_cumulative_loss} to convert this sequence into a sequence for which $a = b$. The resulting sequence has the form ${\binom{0}{1}}^a {\binom{0}{0}}^c {\binom{1}{0}}^a$. We will now show how this sequence can be modified, without increasing the regret, to a sequence for which $c = 2$ (while maintaining the invariant $a = b$). There are two cases.
	
	In the first case, the sequence is of the form ${\binom{0}{1}}^a {\binom{0}{0}}^c {\binom{1}{0}}^a$ where $ a \geq 4$ and $c\geq 4$ (recall that $c$ must be even, and note that if $c =2$, we are done). We may then apply Lemma~\ref{lm:remove_zeros} to decrease the number of $(0,0)$ vectors from the middle by two while maintaining the invariant $a = b$. As we have shown in Lemma \ref{lm:c_even} that $c$ is even, the last time where we can apply this lemma is when $c=4$ and the resulting sequence is of the form ${\binom{0}{1}}^{a + \frac{c-2}{2}} {\binom{0}{0}}^2 {\binom{1}{0}}^{\frac{c-2}{2} + a}$
	
	In the second case, we have $a \in \{1,2,3\}$. Since $a+b+c=T \geq 16$ and as we may assume that $a = b$, it follows that $c \geq 10$. In this case, inequality \eqref{eq:inequality_for_removing_two_zeros} in the proof of Lemma \ref{lm:remove_zeros} holds, which implies that modifying the loss sequence to  $\lossv'_1,\ldots,\lossv'_T={\binom{0}{1}}^{a+1} {\binom{0}{0}}^{c-2} {\binom{1}{0}}^{a+1}$ does not increase the regret. After applying this modification at most three times, the resulting loss sequence will be of the form $\lossv''_1, \ldots, \lossv''_T= {\binom{0}{1}}^a {\binom{0}{0}}^c {\binom{1}{0}}^a$ where $ a \geq 4$. We are now in the first case and so again can arrive at a sequence with two $(0, 0)$ vectors in the middle.
	
	Observe that  ${\binom{0}{1}}^{a + \frac{c-2}{2}} {\binom{0}{0}}^2 {\binom{1}{0}}^{\frac{c-2}{2} + a}$ is of the form ${\binom{0}{1}}^{K} {\binom{0}{0}}^2 {\binom{1}{0}}^{K}$. This form coincides with the canonical best-case sequence we mentioned earlier.

\subsection{Bounding the regret}
As shown in Appendix~\ref{sc:calculating_the_regret}, the regret on the canonical best-case loss sequence can be lower and upper bounded as follows:

\begin{align*}
-\frac{e^2}{ (1-\frac{1}{e})} \sqrt{T}  -  \frac{1}{2} 
\leq \regret(T) 
\leq -\frac{1}{1+e^{\sqrt{2}}}\sqrt{T}  +  \frac{12}{\sqrt{e}} +\frac{1}{2} .
\end{align*}
As both the lower and upper bounds are $-\Theta(\sqrt{T})$, the analysis in Section~\ref{sec:general} is tight in this case.

\section{Discussion}
\label{sec:discussion}
	
In this work, we provided a systematic treatment of best-case lower bounds in online learning for adaptive FTRL algorithms, discussed the impossibility of certain natural extensions, also provided a tighter analysis of such lower bounds in the binary prediction experts setting. As one application, we have shown that our results for adaptive FTRL enable the use a broad class of adaptive online learning algorithms that satisfy a balanced mistakes notion of group fairness. 
Naturally, many questions still remain open, and we hope this work motivates further research in this intriguing topic. 
A first question relates to algorithms that can achieve negative regret: can we characterize the conditions under which this happens? The goal would be to reveal, beyond the particular example we gave in Section~\ref{sec:linearized-ftrl}, structural properties of an instance that lead to substantial outperformance of the best fixed action. Returning to that example, another question arises. We have shown an example of OCO with a strongly convex loss (the squared loss) for which linearized FTRL obtains negative linear regret, whereas our best-case lower bounds for (non-linearized) FTRL with constant learning rate imply $-O(\sqrt{T})$ regret and known upper bounds imply  $O(\log T)$ regret. Thus, while in this situation, linearized FTRL pays a price with respect to regret upper bounds, it can exhibit a switching behavior that might allow it to compete with a shifting sequence of comparators. Investigating this ``blessing'' of linearization would be a fascinating investigation for future work. Finally, in the setting of DTOL with two experts, we showed that our best-case lower bounds are tight via explicit construction of the best-case sequence. It would be interesting to see if a proof of tightness could be obtained without resorting to an explicit construction, as this could allow us to say that the best-case lower bounds are achievable more generally. For instance, whereas our swapping-based technique is most related to that of \cite{vanerven2014follow} for worst-case sequences and 
\cite{lewi2020thompson} (who give both worst and best-case sequences), it could also be worthwhile to draw from previous works \citep{takimoto2000minimax,koolen2014efficient,bartlett2015minimax,koolen2015minimax} which explicitly work out the minimax regret and minimax strategies. 
	
\subsubsection*{Acknowledgements}

CG’s research is partially supported by INRIA through the INRIA Associate Teams project and FONDECYT 1210362 project. NM and AM were supported by the NSERC Discovery Grant RGPIN-2018-03942. We also thank Sajjad Azami for his involvement in the earlier stages of this work.

\bibliography{best_case_lower_bounds}

\appendix

	\section{Derivations for \eqref{eqn:timeless-dtol-bclb}}
	\label{app:timeless}
	
	Recall that we set $\eta_t$ as
	\begin{align*}
		\eta_t = -\log \left( 1 - \min \left\{ \frac{1}{4}, \sqrt{\frac{2 \log d}{L^*_t}} \right\} \right) .
	\end{align*}
        Starting from \eqref{eqn:time-varying-eta} and using the fact that $(\eta_t)_{t \geq 0}$ is non-increasing and $\Phi$ (the negative Shannon entropy) is non-positive, we have
	\begin{align*}
          \regret_T 
          \geq \big( \frac{1}{\eta_T} - \frac{1}{\eta_0} \big) \inf_{w \in \actions} \Phi(w) .
	\end{align*}
        Next, since $-\log(1 - x) \geq x$ for $x < 1$, we have that $\frac{1}{\eta_T} \leq \max \left\{ 4, \sqrt{\frac{L^*_T}{2 \log d}} \right\}$. We also have $\eta_0 = 4$ since $L^*_0 = 0$. Using these two facts, together with $\inf_{w \in \Delta_d} \Phi(w) = -\log d$ yields
	\begin{align*}
		\regret_T \geq \left( 4 - \max \left\{ 4, \sqrt{\frac{L^*_T}{2 \log d}} \right\} \right) \log d .
	\end{align*}
	The result \eqref{eqn:timeless-dtol-bclb} follows from some basics manipulations.

\section{Negative result for Linearized FTRL}
\label{app:linearized-ftrl}

In this section, we prove the claims used in the proof of Theorem~\ref{thm:linearized-ftrl}.
	
\begin{proof}[Proof (of Claim 1)]
Consider an arbitrary round $t$ in the first half. Observe that $g_s = p_s \geq 0$ for all $s < t$; consequently, $G_{t-1} \geq 0$. The claim follows from the definition of $p_t$.
\end{proof}

\begin{proof}[Proof (of Claim 2)]
We will show that the number of rounds in the first segment is at most
\begin{align*}
t_1 := \left\lceil \left( \frac{1}{2 q_0} \left( \log \frac{1 - q_0}{q_0} + \sqrt{\log^2 \frac{1-q_0}{q_0} + 4 q_0^2} \right) \right)^2 \right\rceil .
\end{align*}
  
The idea behind the proof is that whenever $p_t \geq q_0$, the gradient $g_t$ will be sufficiently positive and, consequently, when $p_t$ is updated it will make sufficient progress towards dropping below $q_0$. We now formalize this intuition.
  
Let $\mathcal{T}_1\subseteq [T/2]$ be the set of round indices in the first segment. 
By definition of the first segment, we have $p_t \geq q_0$ for all $t \in \mathcal{T}_1$. Therefore, for $t$ such that $t-1 \in \mathcal{T}_1$,
\begin{align}
p_t 
&= \frac{1}{1 + \exp \left( \eta_t \sum_{s=1}^{t-1} g_t \right)} \nonumber \\
&\leq \frac{1}{1 + \exp \left( \eta_t \sum_{s=1}^{t-1} q_0 \right)} 
   = \frac{1}{1 + \exp \left( q_0 \frac{t-1}{\sqrt{t}} \right)} \label{eqn:claim-2-start}.
\end{align}
  
It suffices to find the smallest $t$ such that \eqref{eqn:claim-2-start} is at most $q_0$. 
This is equivalent to finding the smallest $t$ such that
\begin{align*}
q_0 \frac{t-1}{\sqrt{t}} \geq \log \frac{1 - q_0}{q_0} .
\end{align*}
Rearranging and making the replacement $u := \sqrt{t}$, it suffices to solve the quadratic equation
\begin{align*}
q_0 u^2 - \left( \log \frac{1 - q_0}{q_0} \right) u - q_0 = 0 .
\end{align*}
Taking the positive solution and rounding up yields that the smallest such $t$ is at most
\begin{align*}
\left\lceil \left( 
  \frac
  { \log \frac{1-q_0}{q_0} 
    + \sqrt{\log^2 \frac{1 - q_0}{q_0}  + 4 q_0^2}}
  { 2 q_0 } 
\right)^2 \right\rceil
= t_1 .
\end{align*}
\end{proof}

\begin{proof}[Proof (of Claim 3)]
  
Let $\mathcal{T}_3$ be the set of round indices in the third segment. 
Let $t$ be such that $t - 1 \in \mathcal{T}_3$ (note: this implies that $t - 1 \geq T/2$). Then Claims 1 and 2, together with the definition of segment 3, imply that
\begin{align}
p_t 
&= \frac{1}{1 + \exp \left( \eta_t \sum_{s=1}^{t-1} g_t \right)} \nonumber \\
&\geq \frac{1}{1 + \exp \left( \eta_t \left( t_1 \cdot \frac{1}{2} + \left( \frac{T}{2} - t_1 \right) \cdot q_0 - \left( t - 1 - \frac{T}{2} \right) \cdot (1 - q_1) \right) \right)} \label{eqn:claim-3-start}.
\end{align}
  
It suffices to find the smallest $t$ such that \eqref{eqn:claim-3-start} is at least $q_1$. 
Proceeding similarly to the proof of Claim 2 (and once again introducing $u := \sqrt{t}$ yields the quadratic equation
\begin{align*}
-(1 - q_1) u^2 
+ \left( \log \frac{q_1}{1 - q_1} \right) u 
+ \left( \frac{1 - q_1 + q_0}{2} \cdot T + t_1 \left( \frac{1}{2} - q_0 \right) + 1 - q_1 \right)
= 0
\end{align*}
Solving for $u$ yields
\begin{align*}
u = \frac{\log \frac{q_1}{1 - q_1} + \sqrt{\log^2 \frac{q_1}{1 - q_1} + 4 (1 - q_1) \left( \frac{1 - q_1 + q_0}{2} \cdot T + t_1 \left( \frac{1}{2} - q_0 \right) + 1 - q_1 \right)}}{2 (1 - q_1)} .
\end{align*}
  
From the above, we see that the smallest $t$ satisfying \eqref{eqn:claim-3-start} is equal to
\begin{align*}
\frac{1 - q_1 + q_0}{2 (1 - q_1)} \cdot T + O \left( \sqrt{T} \right) ,
\end{align*}
which proves the claim.
\end{proof}

	\section{Best-case loss sequence for binary DTOL with 2 experts}
        \label{app:best-case-seq}

	\subsection{Dealing with Leader Change}	
	\begin{proof}[Proof of Lemma \ref{op:switching_operation_rule}]
		We show that after switching at round $t$ where $\Delta_{t-1}=L_{2,t-1}-L_{1,t-1}=0$, the cumulative loss of Decreasing Hedge and the best expert remains the same. Therefore, the regret does not change.

		For any round $k\geq t$,
		\begin{align*}
			L'_{i,k} &:= \sum_{s=1}^{t-1}\loss'_{i,s} + \sum_{s=t}^{k}\loss'_{i,s}  =\sum_{s=1}^{t-1}\loss_{i,s} + \sum_{s=t}^{k}\loss_{\noti,s}& (\text{switched loss definition)}\\
			&= L_{i,t-1} + \sum_{s=t}^{k}\loss_{\bar{i},s} = L_{\noti,t-1} + \sum_{s=t}^{k}\loss_{\noti,s}&(\Delta_{t-1}=0)\\
			&= L_{\noti,k}.
		\end{align*} 
		Hence, for $k=T$ we have $L'_{1,T} = L_{2,T}$, and $L'_{2,T} = L_{1,T}$. Therefore, the cumulative loss of the best expert for both loss sequences is the same, but the index of the best expert is changed. Moreover, observe that for any $k \geq t$, 
		\begin{align*}
			\Delta'_k &:= L'_{2,k} - L'_{1,k} = L_{1,k} - L_{2,k} = -\Delta_k .
		\end{align*} 
		Also for round $t$ we know that
		\begin{align*}
			\Delta'_{t-1} &:= L'_{2,t-1} - L'_{1,t-1} = L_{2,t-1} - L_{1,t-1} = \Delta_{t-1} = 0 = -\Delta_{t-1} .
		\end{align*} Therefore, for $s$ such that $s \geq t$, we have $\Delta'_{s-1} = -\Delta_{s-1}$.

		Next, denote the loss of Decreasing Hedge for loss sequence $\lossv_1, \ldots, \lossv_T$ (and $\lossv'_1, \ldots, \lossv'_T$) at round $s$ by $\hat{\loss}_{s}$ (and $\hat{\loss}'_{s}$ respectively). Obviously, as there is no difference up until round $t$, for all rounds $s< t$, we have $\hedgeloss_{s} = \hedgeloss'_s$. Now, for any round $s \geq t$:
		\begin{align*}
			\hedgeloss'_{s} &= \sum_{i\in \{1,2\}} \frac{\exp(-\eta_s(L'_{i,s-1}))}{\exp(-\eta_s(L'_{i,s-1}))+\exp(-\eta_s(L'_{\noti,s-1}))} \loss'_{i,s} \\
			&= \sum_{i\in \{1,2\}} \frac{1}{1 + \exp\big(\eta_s(L'_{i,s-1} - L'_{\noti,s-1})\big)} \loss'_{i,s} \\
			&= \frac{1}{1 + \exp\big(\eta_s(L'_{1,s-1} - L'_{2,s-1})\big)} \loss'_{1,s} + 
			\frac{1}{1 + \exp\big(\eta_s(L'_{2,s-1} -L'_{1,s-1})\big)} \loss'_{2,s} \\
			&= \frac{1}{1 + \exp\big(\eta_s(-\Delta'_{s-1})\big)} \loss'_{1,s} + 
			\frac{1}{1 + \exp\big(\eta_s(\Delta'_{s-1})\big)} \loss'_{2,s} \\
			&=\frac{1}{1 + \exp\big(\eta_s(\Delta_{s-1})\big)} \loss_{2,s} + 
			\frac{1}{1 + \exp\big(\eta_s(-\Delta_{s-1})\big)} \loss_{1,s} \\
			&= \sum_{i\in \{1,2\}} \frac{1}{1 + \exp\big(\eta_s(L_{i,s-1} - L_{\noti,s-1})\big)} \loss_{i,s}  = \hedgeloss_{s}.
		\end{align*}
		Therefore, the cumulative loss of Decreasing Hedge for both loss sequences is the same. 
	\end{proof}
\label{prf:switching_operation_rule}

	\subsection{Swapping two consecutive losses}
	
	\begin{proof}[Proof of Lemma \ref{lm:swapping_rules_lemma_from_Nishant}]
		Let $p_t$ (and $p'_t$) be the probability of choosing arm $1$ in round $t$ in loss sequence $\lossv_1, \ldots, \lossv_T$ (and $\lossv'_1, \ldots, \lossv'_T$ respectively). Define an increasing function $f(x):= \frac{1}{1+\exp(-x)}$. Now, we can rewrite $p_t$ as follows:
		\begin{align*}
			p_t = \frac{1}{1+\exp(-\eta_t{\Delta_{t-1}})}=f(\eta_t \Delta_{t-1}).
		\end{align*}

		\begin{enumerate}[label=(\alph*)]
			\item In this case, the loss incurred by Decreasing Hedge for loss sequence $\lossv_1, \ldots, \lossv_T$ at rounds $t$ and $t+1$ can be expressed as 
			\begin{align*}
				\hedgeloss_{t} + \hedgeloss_{t+1} &= \underbrace{\binom{p_t}{1-p_{t}} \cdot \binom{1}{0} }_{\text{round } t} + \underbrace{\binom{p_{t+1}}{1-p_{t+1}} \cdot \binom{0}{0} }_{\text{round } t+1} = p_{t}.
			\end{align*}
			Similarly the loss incurred by Decreasing Hedge for loss sequence $\lossv'_1, \ldots, \lossv'_T$ at rounds $t$ and $t+1$ can be written as 
			\begin{align*}
				\hedgeloss'_{t} + \hedgeloss'_{t+1} &= \underbrace{\binom{p'_t}{1-p'_{t}} \cdot \binom{0}{0} }_{\text{round } t} + \underbrace{\binom{p'_{t+1}}{1-p'_{t+1}} \cdot \binom{1}{0} }_{\text{round } t+1} = p'_{t+1} .
			\end{align*}
			We know that Decreasing Hedge incurs the same loss at any round $s \in [T]\setminus\{t,t+1\}$. Therefore, 
			\begin{align*}
				\hat{L'}_T - \hat{L}_T &=\sum_{s=1}^T \hedgeloss'_s - \sum_{s=1}^T \hedgeloss_s \\
				&= \big( \hedgeloss'_{t} + \hedgeloss'_{t+1}\big) - \big(\hedgeloss_{t} + \hedgeloss_{t+1} \big)\\
				&= p'_{t+1}  - p_t \\
				&= f(\eta_{t+1}\Delta'_{t}) - f(\eta_{t}\Delta_{t-1})\\
				&= f\big(\eta_{t+1}(\Delta'_{t-1}+0)\big) - f(\eta_{t}\Delta_{t-1})\\
				&=f(\eta_{t+1}\Delta'_{t-1}) - f(\eta_{t}\Delta_{t-1})\\
				&=f(\eta_{t+1}\Delta_{t-1}) - f(\eta_{t}\Delta_{t-1})\\
				&\leq 0,
			\end{align*}
			where the last inequality comes from the fact in Decreasing Hedge, $\eta_{t+1} < \eta_{t}$ and $f$ is an increasing function.

			\item In this case, the loss incurred by Decreasing Hedge for loss sequence $\lossv_1, \ldots, \lossv_T$ at rounds $t$ and $t+1$ can be expressed as
			\begin{align*}
				\hedgeloss_{t} + \hedgeloss_{t+1} &= \underbrace{\binom{p_t}{1-p_{t}} \cdot \binom{0}{0} }_{\text{round } t} + \underbrace{\binom{p_{t+1}}{1-p_{t+1}} \cdot \binom{0}{1} }_{\text{round } t+1} = 1-p_{t+1}.
			\end{align*}
			Similarly the loss incurred by Decreasing Hedge for loss sequence $\lossv'_1, \ldots, \lossv'_T$ at rounds $t$ and $t+1$ can be written as
			\begin{align*}
				\hedgeloss'_{t} + \hedgeloss'_{t+1} &= \underbrace{\binom{p'_t}{1-p'_{t}} \cdot \binom{0}{1} }_{\text{round } t} + \underbrace{\binom{p'_{t+1}}{1-p'_{t+1}} \cdot \binom{0}{0} }_{\text{round } t+1} = 1-p'_{t} .
			\end{align*}
			We know that Decreasing Hedge incurs the same loss at any round $s \in [T]\setminus\{t,t+1\}$; therefore, 
			\begin{align*}
				\hat{L'}_T - \hat{L}_T &=\sum_{s=1}^T \hedgeloss'_s - \sum_{s=1}^T \hedgeloss_s \\
				&=\big( \hedgeloss'_{t} + \hedgeloss'_{t+1}\big) - \big(\hedgeloss_{t} + \hedgeloss_{t+1} \big) \\
				&= (1 -  p'_t) - (1-p_{t+1}) \\
				&= p_{t+1}  - p'_t \\
				&= f(\eta_{t+1}\Delta_{t}) - f(\eta_{t}\Delta'_{t-1})\\
				&= 
				f\big(\eta_{t+1}(\Delta_{t-1}+0)\big) - f(\eta_{t}\Delta'_{t-1}) \\
				&=f(\eta_{t+1}\Delta_{t-1}) - f(\eta_{t}\Delta_{t-1})\\
				&\leq 0,
			\end{align*}
			where the last inequality comes from the fact in Decreasing Hedge, $\eta_{t+1} < \eta_{t}$ and $f$ is an increasing function.

			\item In this case, as we assume that expert $1$ is always the leader, $\Delta_s \geq 0$ for all $s$. Therefore, $\Delta_t = \Delta_{t-1} - 1 \geq 0$. The loss incurred by Decreasing Hedge for loss sequence $\lossv_1, \ldots, \lossv_T$ at rounds $t$ and $t+1$ can be expressed as
			\begin{align*}
				\hedgeloss_{t} + \hedgeloss_{t+1} &= \underbrace{\binom{p_t}{1-p_{t}} \cdot \binom{1}{0} }_{\text{round } t} + \underbrace{\binom{p_{t+1}}{1-p_{t+1}} \cdot \binom{0}{1} }_{\text{round } t+1} = (p_t) + (1-p_{t+1}).
			\end{align*}
			Similarly, the loss incured by Decreasing Hedge for loss sequence $\lossv'_1, \ldots, \lossv'_T$ at rounds $t$ and $t+1$ can be written as
			\begin{align*}
				\hedgeloss'_{t} + \hedgeloss'_{t+1} &= \underbrace{\binom{p'_t}{1-p'_{t}} \cdot \binom{0}{1} }_{\text{round } t} + \underbrace{\binom{p'_{t+1}}{1-p'_{t+1}} \cdot \binom{1}{0} }_{\text{round } t+1} = (1-p'_{t}) + (p'_{t+1}).
			\end{align*}
			We know that Decreasing Hedge incurs the same loss at any round $s \in [T]\setminus\{t,t+1\}$; therefore, 
			\begin{align*}
				\hat{L'}_T - \hat{L}_T &=\sum_{s=1}^T \hedgeloss'_s - \sum_{s=1}^T \hedgeloss_s \\
				&=\big( \hedgeloss'_{t} + \hedgeloss'_{t+1}\big) - \big(\hedgeloss_{t} + \hedgeloss_{t+1} \big) \\
				&= (p'_{t+1} + p_{t+1}) - (p'_t + p_t) \\
				&= \big(f(\eta_{t+1}\Delta'_{t}) + f(\eta_{t+1}\Delta_{t})\big)
				- \big(f(\eta_{t}\Delta'_{t-1})+ f(\eta_{t}\Delta_{t-1})\big)\\
				&= \big(f(\eta_{t+1}\Delta'_{t}) + f(\eta_{t+1}\Delta_{t})\big)
				- \big(2\cdot f(\eta_{t}\Delta_{t-1})\big)\\
				&= \Big(f\big(\eta_{t+1}(\Delta_{t-1}+1)\big) + f\big(\eta_{t+1}(\Delta_{t-1}-1)\big)\Big)
				- 2\cdot f(\eta_{t}\Delta_{t-1})\\
				&=2 \cdot \Big(\frac{1}{2}f\big(\eta_{t+1}(\Delta_{t-1}+1)\big) + \frac{1}{2} f\big(\eta_{t+1}(\Delta_{t-1}-1)\big)\Big)
				- 2\cdot f(\eta_{t}\Delta_{t-1})\\
				&\leq 2 \cdot f \Big( \frac{1}{2} \eta_{t+1}(\Delta_{t-1}+1)  + \frac{1}{2} \eta_{t+1}(\Delta_{t-1}-1)\Big)
				- 2 \cdot f(\eta_{t}\Delta_{t-1})\\
				&= 2 \cdot f (  \eta_{t+1}\Delta_{t-1})
				- 2 \cdot f(\eta_{t}\Delta_{t-1})\\
				&= 2 \cdot \big[ f (  \eta_{t+1}\Delta_{t-1}) 
				-  f(\eta_{t}\Delta_{t-1}) \big] \\
				&\leq 0,
			\end{align*}
			where the first inequality is Jensen's inequality applied for function $f$ which is concave for nonnegative domain and $\Delta_{t-1} -1$ and $\Delta_{t-1}+1$ are nonnegative. The second inequality comes from the fact that in Decreasing Hedge, $\eta_{t+1} < \eta_{t}$ and $f$ is an increasing function.
		\end{enumerate}
	\end{proof}
\label{prf:swapping_rules_lemma_from_Nishant}

	\subsection{Regret is minimized when both experts have the same cumulative loss}
	
	\begin{proof}[Proof of Lemma \ref{lm:c_even}]
		As $T=a+b+2k-1$, and $T$ is even, $a+b$ should be odd. Therefore, $a \neq b$. Which implies that $a > b$. Thus $a - 1 \geq b$. Therefore, the best expert in both loss sequences has the same loss:
		\begin{align*}
			\min_{i}L'_{i,T}=\min_{i}L_{i,T}=b .
		\end{align*}
		Also, $\hedgeloss'_t=\hedgeloss_t$ for all $t < a$ as the loss sequences are the same for all rounds $t<a$.

		In round $t=a$, Decreasing Hedge incurs some positive loss in $\lossv_1, \ldots, \lossv_T$ while it incurs zero loss in sequence $\lossv'_1, \ldots, \lossv'_T$. In other words, $\hedgeloss'_a = 0 < \hedgeloss_a$.

		For rounds $t \in \{a+1, \ldots, a+2k-1\}$, 	Decreasing Hedge incurs zero loss for both loss sequences $\lossv_1, \ldots, \lossv_T$ and $\lossv'_1, \ldots, \lossv'_T$. In other words, $\hedgeloss'_t = \hedgeloss_t =0$. 
		
		Since $\Delta'_{t-1} = L'_{2,t-1} - L'_{1,t-1} = (L_{2,t-1}-1) - L_{1,t-1}= \Delta_{t-1} - 1$ for any round $t \in \{a+(2k-1)+1,\ldots, a+(2k-1)+b\}$, 
		\begin{align*}
			\hedgeloss'_t =\frac{1}{1+\exp(- \eta_t \Delta'_{t-1})} =  \frac{1}{1+\exp\big(- \eta_t (\Delta_{t-1}-1)\big)} \leq \frac{1}{1+\exp(- \eta_t \Delta_{t-1})} = \hedgeloss_t.
		\end{align*}
		
		As a result, $\hedgeloss'_t \leq \hedgeloss_t$ for all $1\leq t \leq T$. Therefore, 
		\begin{align*}
			\regret'(T)=\sum_{t=1}^T{\hedgeloss'_t} - \min_i L'_{i,T} &= \sum_{t=1}^T{\hedgeloss'_t} - \min_i L_{i,T} \\
			&\leq \sum_{t=1}^T{\hedgeloss_t} - \min_i L_{i,T} = \regret(T).
		\end{align*}
	\end{proof}
\label{prf:c_even}

	\begin{proof}[Proof of Lemma \ref{lm:equalize_both_experts_cumulative_loss}]
		Consider a sequence $\lossv_1, \ldots, \lossv_T=\left({\binom{0}{1}}^{a} {\binom{0}{0}}^c {\binom{1}{0}}^{b}\right)$ where $a \neq b$. Therefore, $a > b$ which means that $a\geq b+1$. If $a=b+1$, then $a+b=2b+1\neq 2K$. This means that $a\neq b+1$. Therefore, $a \geq b+2$. 
		
		Now, consider $\lossv'_1, \ldots, \lossv'_T=\left({\binom{0}{1}}^{a-1} \binom{1}{0} {\binom{0}{0}}^c {\binom{1}{0}}^{b}\right)$. We know that the best expert in $\lossv_1, \ldots, \lossv_T$ incurs $L_{1,T}=b < a = L_{2,T}$ whereas the best expert in $\lossv'_1, \ldots, \lossv'_T$ incurs additional unit loss, since $L'_{1,T} = b+1\leq a-1 = L'_{2,T}$. Therefore, $\min_i{L'_{i,T}}=b+1 = \min_i{L_{i,T}}+1$.

		We now show that the cumulative loss of Decreasing Hedge for loss sequence $\lossv'_1, \ldots, \lossv'_T$ is at most 1 unit greater than cumulative loss for loss sequence $\lossv_1, \ldots, \lossv_T$. Therefore, $\regret'(T) \leq \regret(T)$.
		
		For rounds $t<a$, Decreasing Hedge incurs the same loss for both loss sequences, i.e., $\hedgeloss'_t=\hedgeloss_t$ for all $t < a$.
		In round $t=a$, we have $\hedgeloss'_a=\frac{1}{1+\exp(-\eta_a(a-1))}\geq 0$ and $\hedgeloss_a=\frac{1}{1+\exp(\eta_a(a-1))}\geq 0$. Observe that  $\hedgeloss'_a + \hedgeloss_a = \frac{1}{1+\exp(-\eta_a(a-1))}+\frac{1}{1+\exp(\eta_a(a-1))}=1 $. Therefore,  
		\begin{align*}
			\hedgeloss'_a - \hedgeloss_a = (1-\hedgeloss_a) - \hedgeloss_a = 1 - 2\hedgeloss_a \leq 1.
		\end{align*}

		As a result, in round $a$ Decreasing Hedge incurs at most 1 unit of additional loss in loss sequence $\lossv'_1, \ldots, \lossv'_T$ as compared to loss sequence $\lossv_1, \ldots, \lossv_T$.

		For rounds $t \in \{a+1, \ldots, a+c \}$, Decreasing Hedge incurs zero loss in both loss sequences.
		
		Note that since there is a change from $\binom{0}{1}$ to $\binom{1}{0}$ in round $a$, for any round $t \in \{ a+1,\ldots, a+c+b\}$, we have $\Delta'_{t-1} = L'_{2,t-1} - L'_{1,t-1} = \Delta_{t-1} - 2$. Therefore, for any round $t$ in this period, Decreasing Hedge incurs less loss in $\lossv'_1, \ldots, \lossv'_T$ than $\lossv_1, \ldots, \lossv_T$:
		\begin{align*}
			\hedgeloss'_t =\frac{1}{1+\exp(- \eta_t \Delta'_{t-1})} =  \frac{1}{1+\exp\big(- \eta_t (\Delta_{t-1}-2)\big)} \leq \frac{1}{1+\exp(- \eta_t \Delta_{t-1})} = \hedgeloss_t.
		\end{align*}
		As a result, the cumulative loss of Decreasing Hedge in loss sequence $\lossv'_1, \ldots, \lossv'_T$ is at most 1 unit greater that Decreasing Hedge's cumulative loss in loss sequence $\lossv_1, \ldots, \lossv_T$, i.e.,
		\begin{align*}
			\sum_{t=1}^T{\hedgeloss'_t} \leq  1 + \sum_{t=1}^T \hedgeloss_t.
		\end{align*}
		Therefore, 
		\begin{align*}
			\regret'(T) &=\sum_{t=1}^T \hedgeloss'_t - \min_i{L'_{i,T}} \\
			&=\sum_{t=1}^T \hedgeloss'_t - (\min_i{L_{i,T}}+1)\\
			&\leq(1 + \sum_{t=1}^T \hedgeloss_t) - (\min_i{L_{i,T}}+1)\\
			&=\sum_{t=1}^T \hedgeloss_t - \min_i{L_{i,T}} =  \regret(T).
		\end{align*}
		Now by Lemma \ref{lm:swapping_rules_lemma_from_Nishant}, we can move the loss vector $(1,0)$ in $\lossv'_1, \ldots, \lossv'_t$ in round $a$ to the right without increasing the regret. Therefore, the regret for 
		$\lossv''_1, \ldots, \lossv''_t = \left({\binom{0}{1}}^{a-1} {\binom{0}{0}}^c {\binom{1}{0}}^{b+1}\right)$ is no greater than the regret for $\lossv_1, \ldots, \lossv_T$. This means that when $a\geq b+1$, converting loss sequence $\left({\binom{0}{1}}^{a} {\binom{0}{0}}^c {\binom{1}{0}}^{b}\right)$ to $\left({\binom{0}{1}}^{a-1} {\binom{0}{0}}^c {\binom{1}{0}}^{b+1}\right)$ does not increase the regret. We can apply this rule many times to equalize the numbers $a$ and $b$. As $a+b=2K$, this implies $a=b=K$ and the lemma follows.
	\end{proof}
\label{prf:equalize_both_experts_cumulative_loss}

\subsection{The optimum number of $(0,0)$ loss vectors in the middle}

	\begin{proof}[Proof of Lemma \ref{lm:remove_zeros}]
	First of all, observe that both experts incur one additional unit loss in loss sequence $\lossv'_1,\ldots,\lossv'_T$ in comparison to $\lossv_1,\ldots,\lossv_T$. Therefore,
	\begin{align*}
		\max_i{L'_{i,T}}=\max_i{L_{i,T}}+1.
	\end{align*}
	Note that in any round $s$ except rounds $t=a+1$ and $t'=a+c$, we have $\hedgeloss_s = \hedgeloss'_s$.

	On the other hand, for loss $\lossv'_1,\ldots,\lossv'_T$, in round $t=a+1$ and $t'=a+c$, Decreasing Hedge incurs loss
	\begin{align*}
		\hedgeloss'_{a+1}= & \ \ \frac{1}{1+\exp(-\eta_t(a))}, \\
		\hedgeloss'_{a+c}= & \ \ \frac{1}{1+\exp(\eta_{t'}(a+1))} .
	\end{align*}
	while for $\lossv_1,\ldots,\lossv_T$, the algorithm does not incur any loss in those rounds (i.e.~$\hedgeloss_{a+1}+\hedgeloss_{a+c}=0$). Therefore, the difference in regret would be
	\begin{align*}
		\regret'(T)-\regret(T) = \frac{1}{1 + \exp(\eta_t (a))} + \frac{1}{1 + \exp(-\eta_{t'}(a+1))} - 1 .
	\end{align*}

	Let $f(x):=\frac{1}{1+\exp(-x)}$. Observe that $f(x)+f(-x)=1$. Therefore,
	\begin{align*}
		\regret'(T)-\regret(T) &= \frac{1}{1 + \exp(\eta_t (a))} + \frac{1}{1 + \exp(-\eta_{t'}(a+1))} - 1 \\
		&= f(-\eta_t(a)) + f(\eta_{t'}(a+1)) - 1 \\
		&= f\big(-\eta_t(a)\big) + f\big(\eta_{t'}(a+1)\big) - \Big(f\big(\eta_t(a)\big)+f\big(-\eta_t(a)\big)\Big) \\
		& = f(\eta_{t'}(a+1)) - f(\eta_{t}(a)).\\
	\end{align*}

	As $f$ is an increasing function, $f(\eta_{t'}(a+1)) - f(\eta_{t}(a)) \leq 0$ if and only if
	\begin{equation}
		\eta_{t'}(a+1) \leq  \eta_{t}(a) .
		\label{eq:decreasing_c_inequality}
	\end{equation}

	Therefore, if inequality \eqref{eq:decreasing_c_inequality} holds, then the replacement does not increase the regret. Substituting the learning rate, inequality \eqref{eq:decreasing_c_inequality} becomes
	\begin{equation*}
		\frac{a+1}{\sqrt{a+c}}	 \leq \frac{a}{\sqrt{a+1}},
	\end{equation*}
	which is equivalent to 
	\begin{align}
		c \geq \frac{3a^2+3a+1}{a^2}= 3 + \frac{3}{a} + \frac{1}{a^2}.
		\label{eq:inequality_for_removing_two_zeros}
	\end{align}
	
	Note that in case $a\geq 4$ and $c \geq 4$, the inequlity $\frac{3}{a} + \frac{1}{a^2} <1$  holds; therefore, \eqref{eq:inequality_for_removing_two_zeros} holds which means that the regret does not increase. 
	
	In the case $c\leq 3$, then \eqref{eq:inequality_for_removing_two_zeros} never holds which means that this replacement increases the regret. 
\end{proof}
\label{prf:remove_zeros}
	
	\subsection{Bounding the regret}
	\label{sc:calculating_the_regret}
	So far, we have shown that among all loss sequence of size $T=2K+2\geq 16$, the loss sequence $\left({\binom{0}{1}}^{K} {\binom{0}{0}}^2 {\binom{1}{0}}^{K}\right)$ yields the minimum regret for Decreasing Hedge. We will now give upper and lower bounds on the regret for this sequence. The algorithm's loss for this sequence is
	\begin{align*}
		\hat{L}_T = \sum_{t=1}^T\hedgeloss_t &= \underbrace{\hedgeloss_1 +\sum_{t=2}^K (\hedgeloss_{t} ) }_{{{\binom{0}{1}}^K}}+  \underbrace{\hedgeloss_{K+1} + \hedgeloss_{K+2}}_{{{\binom{0}{0}}^2}} + \underbrace{\hedgeloss_{K+3} + \sum_{t=2}^K \hedgeloss_{2K+4-t}}_{{\binom{1}{0}}^K} \\
		&=\hedgeloss_1 +  \hedgeloss_{K+3} + \sum_{t=2}^K \hedgeloss_{t} + \hedgeloss_{2K+4-t} \\
		&= \frac{1}{2} +  \frac{1}{1+\exp(-\eta_{K+3}(K))} + \sum_{t=2}^K 
		\Big( \frac{1}{1+\exp{\big(\eta_t(t-1)\big)}} + \frac{1}{1+\exp{\big(-\eta_{2K+4-t}(t-1)}\big)} \Big).
	\end{align*}
	
	Therefore, 
	\begin{align*}
		\regret(T) &= \hat{L}_T - \min_i L_{i,T} \\
		&= \hat{L}_T - K\\
		&= \sum_{t=2}^K 
		\Big( \frac{1}{1+\exp\big({\eta_t(t-1)\big)}} + \frac{1}{1+\exp{\big(-\eta_{2K+4-t}(t-1)}\big)} \Big) \\
		&\quad+ \frac{1}{2} + \frac{1}{1+\exp(-\eta_{K+3}(K))} - K
		\\
		&=\sum_{t=2}^K 
		\Big( \frac{1}{1+\exp\big({\eta_t(t-1)\big)}} + \frac{1}{1+\exp{\big(-\eta_{2K+4-t}(t-1)}\big)} - 1\Big) + \sum_{t=2}^K 1 \\
		&\quad+ \frac{1}{2}+\frac{1}{1+\exp(-\eta_{K+3}(K))} - K \\ 
		&= \sum_{t=2}^K\Big( \frac{1}{1+\exp\big({\eta_t(t-1)\big)}} + \frac{1}{1+\exp{\big(-\eta_{2K+4-t}(t-1)}\big)} - 1\Big)\\
		&\quad+ \frac{1}{2}+\frac{1}{1+\exp(-\eta_{K+3}(K))} - 1.
		\label{eq:regret_calculating_1}
	\end{align*}	
	As we know that $\frac{1}{1+\exp{\big(-\eta_{2K+4-t}(t-1)}\big)} + \frac{1}{1+\exp{\big(+\eta_{2K+4-t}(t-1)}\big)} = 1$,
        it follows that
	\begin{align*}
		\frac{1}{1+\exp\big({\eta_t(t-1)\big)}} + \frac{1}{1+\exp{\big(-\eta_{2K+4-t}(t-1)}\big)} - 1 = \frac{1}{1+\exp\big({\eta_t(t-1)\big)}} - \frac{1}{1+\exp{\big(\eta_{2K+4-t}(t-1)}\big)}.
	\end{align*} 
Hence, we can rewrite the regret as 	
	\begin{align*}
		\regret(T) &= 
		\underbrace{\sum_{t=2}^K \frac{1}{1+\exp\big({\eta_t(t-1)\big)}}}_{\text{A = First term}} - \underbrace{\sum_{t=2}^K \frac{1}{1+\exp{\big(\eta_{2K+4-t}(t-1)}\big)}}_{\text{B = Second term}}+\underbrace{\frac{1}{2}+\frac{1}{1+\exp(-\eta_{K+3}(K))} - 1}_{\text{C =  Third term}}.
	\end{align*}
	
	We will give upper and lower bounds for each term separately. Note that $K=\frac{T-2}{2}$. In the development of the bounds below, we assume that $T$ is sufficiently large. It suffices to have $T \geq 16$.

	Now, for the A (First term), we have
	\begin{align*}
		0 \leq \text{A} &= \sum_{t=2}^K \frac{1}{1+\exp(\eta_t(t-1))}=
		\sum_{t=2}^K \frac{1}{1+\exp(\sqrt{t} - \frac{1}{\sqrt{t}})}\\
		&\leq \sum_{t=2}^K \frac{1}{1+\exp(\frac{1}{2} \sqrt{t})} \leq  \sum_{t=2}^K \frac{1}{\exp(\frac{1}{2} \sqrt{t})}\\
		&\leq \int_1^K \frac{1}{\exp(\frac{1}{2} \sqrt{x})} \,dx \\
		&=(-4\sqrt{K}-8)\exp(-0.5 \sqrt{K}) - (-4\sqrt{1}-8)\exp(-0.5 \sqrt{1})\\
		&\leq (4\sqrt{1}+8)\exp(-0.5 \sqrt{1})\\
		& = \frac{12}{\sqrt{e}}.
	\end{align*}
	To upper bound B (Second term), we divide the $T$ rounds into $\left\lceil \sqrt{T}\right\rceil$ intervals. We then show that sum of the contributions of the intervals is upper bounded by sum of a geometric series that converge to a constant value. 
	
	The second term can be written as the sum of  
$\left\lceil\sqrt{T}\right\rceil$ intervals as  
	\begin{align*}
		\sum_{t=2}^K \frac{1}{1+\exp{\big(\eta_{2K+4-t}(t-1)}\big)} 
		&=\sum_{t=2}^{\frac{T}{2}-1} \frac{1}{1+\exp{\big(\eta_{T+2-t}(t-1)}\big)} \\
		&\leq  \sum_{t=1}^{T} \frac{1}{1+\exp{\big(\frac{t-1}{\sqrt{T+2-t}}}\big)}  \\
		&= \sum_{k=0}^{\left\lceil\sqrt{T}\right\rceil-1} \,\, \sum_{t=k \left\lceil\sqrt{T}\right\rceil+1}^{k\left\lceil\sqrt{T}\right\rceil+\left\lceil\sqrt{T}\right\rceil}\frac{1}{1+\exp{\big(\frac{t-1}{\sqrt{T+2-t}}}\big)}\\
		&\leq \sum_{k=0}^{\left\lceil\sqrt{T}\right\rceil-1} \,\, \sum_{t=k \left\lceil\sqrt{T}\right\rceil+1}^{k\left\lceil\sqrt{T}\right\rceil+\left\lceil\sqrt{T}\right\rceil} \frac{1}{1+\exp({\frac{k\sqrt{T}}{\sqrt{T+2}}})}\\
		& \leq \left\lceil\sqrt{T}\right\rceil \sum_{k=0}^{\left\lceil\sqrt{T}\right\rceil - 1} \frac{1}{1+\exp({\frac{k\sqrt{T}}{2+\sqrt{T}}})}.
	\end{align*}
	The term
$\displaystyle \sum_{k=0}^{\left\lceil\sqrt{T}\right\rceil-1} \frac{1}{1+\exp({\frac{k\sqrt{T}}{2+\sqrt{T}}})}$ 
 can be upper bounded as
	\begin{align*}
		\sum_{k=0}^{\left\lceil\sqrt{T}\right\rceil-1} \frac{1}{1+\exp({\frac{k\sqrt{T}}{2+\sqrt{T}}})}
		=  \sum_{k=0}^{\left\lceil\sqrt{T}\right\rceil-1} \frac{1}{1+\exp\big({k - \frac{2k}{2+ \sqrt{T}}}\big)}
		& \leq \sum_{k=0}^{\left\lceil\sqrt{T}\right\rceil-1} \frac{1}{1+\exp(k-2)}\\
		&\leq \sum_{k=0}^{\left\lceil\sqrt{T}\right\rceil-1} \frac{1}{\exp(k-2)} \\
		&= { e^2 } \sum_{k=0}^{\left\lceil\sqrt{T}\right\rceil-1}  {\left(\frac{1}{e}\right)}^{k} \\
		&\leq { e^2 } \left( \frac{1-{\frac{1}{e}}^{\left\lceil\sqrt{T}\right\rceil}}{1-\frac{1}{e}} \right) \\
		&\leq \frac{ {e^2 }}{1-\frac{1}{e}}.
	\end{align*}
As a result, 
\begin{align*}
	\sum_{t=2}^K \frac{1}{1+\exp{\big(\eta_{2K+4-t}(t-1)}\big)} \leq \left\lceil\sqrt{T}\right\rceil \cdot \sum_{k=0}^{\left\lceil\sqrt{T}\right\rceil - 1} \frac{1}{1+\exp({\frac{k\sqrt{T}}{2+\sqrt{T}}})} \leq \frac{ {e^2 }}{1-\frac{1}{e}} \cdot \left\lceil \sqrt{T} \right\rceil .
\end{align*}

	Moreover,  B (Second term) can be lower bounded as follows:
	\begin{align*}
		\sum_{t=2}^K \frac{1}{1+\exp{\big(\eta_{2K+4-t}(t-1)}\big)} 
                &= \sum_{t=2}^K \frac{1}{1+\exp{\big(\eta_{T+2-t}(t-1)}\big)}\\
		&= \sum_{t=2}^{\frac{T}{2}-1} \frac{1}{1+\exp(\frac{t-1}{\sqrt{T+2-t}})}\\
		&\geq \sum_{t=2}^{\left\lceil\sqrt{T}\right\rceil+1} \frac{1}{1+\exp(\frac{t-1}{\sqrt{T+2-t}})}\\
		&\geq \sum_{t=2}^{\left\lceil\sqrt{T}\right\rceil+1} \frac{1}{1+\exp\left({\frac{\left\lceil\sqrt{T}\right\rceil}{\sqrt{T+2-t}}}\right)}\\
		&\geq \sum_{t=2}^{\left\lceil\sqrt{T}\right\rceil+1} \frac{1}{1+\exp\left({\frac{\sqrt{T}}{\sqrt{T+1-\left\lceil\sqrt{T}\right\rceil}}}\right)}\\
		&\geq \sum_{t=2}^{\left\lceil\sqrt{T}\right\rceil+1} \frac{1}{1+\exp\left({\frac{\left\lceil\sqrt{T}\right\rceil}{\sqrt{\frac{1}{2}}\left\lceil\sqrt{T}\right\rceil}}\right)}\\
		&= \sum_{t=2}^{\left\lceil\sqrt{T}\right\rceil+1} \frac{1}{1+\exp({\sqrt{2}})}\\
		&\geq \frac{\sqrt{T}}{1+\exp({\sqrt{2}})} .
	\end{align*}
	
	Therefore, 
	\begin{align*}
		\frac{1}{1+e^{\sqrt{2}}}\sqrt{T} \leq B =  \sum_{t=2}^K \frac{1}{1+\exp{\big(\eta_{2K+4-t}(t-1)}\big)} \leq \frac{e^2}{ (1-\frac{1}{e})} \left\lceil\sqrt{T}\right\rceil .
	\end{align*}

	For the C (Third term), as we know 
$0 \leq \hedgeloss_{K+3} = \frac{1}{1+\exp(-\eta_{K+3}(K))} \leq 1$, 
it follows that
	\begin{align*}
	-\frac{1}{2}\leq \text{C}  \leq \frac{1}{2}.
	\end{align*}
	
	As a result, we have
	\begin{align*}
		-\frac{e^2}{ (1-\frac{1}{e})}\cdot \left\lceil\sqrt{T}\right\rceil  -\frac{1}{2} \leq A-B+C \leq \frac{12}{\sqrt{e}} +\frac{1}{2} - \frac{1}{1+e^{\sqrt{2}}}\sqrt{T},
	\end{align*}
	which for suitably defined constants is 
	\begin{align*}
		-c_1 \sqrt{T} + c_2 \leq \regret(T) \leq c_3 - c_4\sqrt{T}.
	\end{align*}
	Therefore, 
	\begin{align*}
		\regret(T) = -\Theta(\sqrt{T}).
	\end{align*}

\end{document}